\newcolumntype{L}{>{\centering\arraybackslash} m{0.05\textwidth}}
\newcolumntype{S}{>{\centering\arraybackslash} m{0.45\textwidth}}
\newcommand{\ignore}[1]{}
\title{On the Universality of Online Mirror Descent}
\author{
Nathan Srebro\\
TTIC\\
\texttt{  nati@ttic.edu  } \\
\and
Karthik Sridharan\\
TTIC\\
\texttt{  karthik@ttic.edu  } \\
\and
Ambuj Tewari\\
University of Texas at Austin\\
\texttt{ambuj@cs.utexas.edu} \\
}
 \newcommand{\ind}[1]{\ 1\hspace{-2.3mm}{1} _{\{#1\}}}
\newtheorem{theorem}{Theorem}
\newtheorem{lemma}[theorem]{Lemma}
\newtheorem{definition}{Definition}
\newtheorem{corollary}[theorem]{Corollary}
\newtheorem{proposition}[theorem]{Proposition}
\newcommand{\reals}{\ensuremath{\mathbb{R}}}
\newcommand{\R}{\ensuremath{\mathbb{R}}}
\newcommand{\argmin}[1]{\underset{#1}{\mathrm{argmin}} \:}
\newcommand{\E}[1]{{\mathbb{E}\left[{#1}\right]}}
\newcommand{\ip}[2]{{\left\langle{#1},{#2}\right\rangle}}
\newcommand{\inner}[1]{{\left\langle{#1}\right\rangle}}
\newcommand{\norm}[1]{\left\lVert{#1}\right\rVert}
\newcommand{\z}{z}
\newcommand{\w}{\ensuremath{\mathbf{w}}}
\renewcommand{\v}{\ensuremath{\mathbf{v}}}
\newcommand{\W}{\ensuremath{\mathbf{W}}}
\newcommand{\x}{\ensuremath{\mathbf{x}}}
\newcommand{\wopt}{\w^{\star}} % w minimizing regularized gen error
\newcommand{\F}{{F}}
\newcommand{\X}{\mathcal{X}}
\newcommand{\y}{\mathbf{y}}
\newcommand{\breg}[3]{\Delta_{#1}\left( #2\middle| #3 \right)}
\newcommand{\mc}[1]{\mathcal{#1}}
\newcommand{\mbb}[1]{\mathbb{#1}}
\newcommand{\mbf}[1]{\mathbf{#1}}
\renewcommand{\v}{\mathbf{v}}
\newcommand{\B}{\mc{B}}
\newcommand{\Bd}{\mc{B}^{\star}}
\renewcommand{\W}{\mc{W}}
\newcommand{\Wd}{\mc{W}^\star}
\renewcommand{\X}{\mc{X}}
\newcommand{\Val}{\mc{V}}
\newcommand{\Reg}{\mbf{R}}
\newcommand{\Xd}{\mc{X}^\star}
\renewcommand{\z}{\mbf{z}}
\newcommand{\En}{\mbb{E}}
\renewcommand{\F}{\mc{F}}
\newcommand{\Algo}{\mc{A}}
\newcommand{\V}{V}
\newcommand{\Algomd}{\mc{A}_{\mathrm{MD}}}
\newcommand{\MD}{\mathrm{MD}}
\begin{document}

\maketitle

\begin{abstract}
We show that for a general class of convex online learning
problems, Mirror Descent can always achieve a (nearly) optimal regret guarantee. 
\end{abstract}

\section{Introduction}\label{sec:intro}
Mirror Descent is a first-order optimization procedure which
generalizes the classic Gradient Descent procedure to non-Euclidean
geometries by relying on a ``distance generating function'' specific
to the geometry (the squared $\ell_2$-norm in the case of standard
Gradient Descent) \cite{nemirovski1978cesaro,beck2003mirror}.  Mirror
Descent is also applicable, and has been analyzed, in a stochastic
optimization setting \cite{juditsky2009stochastic} and in an online
setting, where it can ensure bounded online regret
\cite{shalev2007convex}.  In fact, many classical online learning
algorithms can be viewed as instantiations or variants of Online
Mirror Descent, generally either with the Euclidean geometry (e.g. the
Perceptron algorithm \cite{Block62} and Online Gradient Descent \cite{Zinkevich03}), or in the simplex ($\ell_1$ geometry), using an entropic distance generating function (Winnow \cite{Littlestone88} and Multiplicative Weights / Online Exponentiated Gradient algorithm \cite{KivinenWa97}).  More recently, the Online Mirror Descent framework
has been applied, with appropriate distance
generating functions derived for a variety of new learning problems 
like multi-task learning and other matrix learning problems \cite{KakShaTew10}, online PCA \cite{WarKuz07} etc.

In this paper, we show that Online Mirror Descent is, in a sense, {\em  universal}.  That is, for any convex online learning problem, of
a  general form (specified in Section \ref{sec:problem}), if the
problem is online learnable, then it is online learnable, with a
nearly optimal regret rate, using Online Mirror Descent, with an
appropriate distance generating function. Since Mirror descent is a first order method and often has simple and computationally efficient update rules, this makes the result especially attractive. Viewing online learning as
a sequentially repeated game, this means that Online Mirror Descent is a near optimal strategy, guaranteeing an outcome very close to the
value of the game.  

In order to show such universality, we first generalize and refine the
standard Mirror Descent analysis to situations where the constraint
set is not the dual of the data domain, obtaining a general upper
bound on the regret of Online Mirror Descent in terms of the existence
of an appropriate uniformly convex distance generating function
(Section \ref{sec:MD}). We then extend the notion of a {\em
  martingale type} of a Banach space to be sensitive to both the
constraint set and the data domain, and building on results of
\cite{SriTew10}, we relate the value of the online learning repeated game to
this generalized notion of martingale type (Section \ref{sec:martingale}).
Finally, again building on and generalizing the work of \cite{Pisier75}, we
show how having appropriate martingale type guarantees the existence
of a good uniformly convex function (Section \ref{sec:typeconvex}),
that in turn establishes the desired nearly-optimal guarantee on
Online Mirror Descent (Section \ref{sec:opt}).  We mainly build on the analysis of \cite{SriTew10}, who related the value of the online game to the notion of martingale type of a Banach space and uniform convexity when the constraint set and data domain are dual to each other. The main
technical advance here is a non-trivial generalization of their
analysis (as well as the Mirror Descent analysis) to the more general
situation where the constraint set and data domain are chosen independently of each other.
In Section \ref{sec:examples} several examples are provided that demostrate the use of our analysis.

Mirror Descent was initially introduced as a first order deterministic
optimization procedure, with an $\ell_p$ constraint and a matching
$\ell_q$ Lipschitz assumption ($1\leq p \leq 2, 1/q+1/p=1$), was shown
to be optimal in terms of the {\em number of exact gradient
  evaluations} \cite{NemirovskiYu78}.  Shalev-Shwartz and Singer later
observed that the online version of Mirror Descent, again with an
$\ell_p$ bound and matching $\ell_q$ Lipschitz assumption ($1\leq p
\leq 2, 1/q+1/p=1$), is also optimal in terms of the worst-case
(adversarial) online regret.  In fact, in such scenarios stochastic
Mirror Descent is also optimal in terms of the number of samples used.
We emphasize that although in most, if not all, settings known to us
these three notions of optimality coincide, here we focus only on the
worst-case online regret.

Sridharan and Tewri \cite{SriTew10} generalized the optimality of
online Mirror Descent (w.r.t.~the worst case online regret) to
scenarios where learner is constrained to a unit ball of an arbitrary
Banach space (not necessarily and $\ell_p$ space) and the objective
functions have sub-gradients that lie in the {\em dual ball} of the
space---for reasons that will become clear shortly, we refer to this
as the {\em data domain}.  However, often we encounter problems where
the constraint set and data domain are not dual balls, but rather are
arbitrary convex subsets.  In this paper, we explore this more
general, ``non-dual'', variant, and show that also in such scenarios
online Mirror Descent is (nearly) optimal in terms of the (asymptotic)
worst-case online regret.

\section{Online Convex Learning Problem}\label{sec:problem}
An online convex learning problem can be viewed as a multi-round
repeated game where on round $t$, the learner first picks a vector
(predictor) $\w_t$ from some fixed set $\W$, which is a closed convex
subset of a vector space $\B$. Next, the adversary picks a convex cost
function $f_t : \W \mapsto \reals$ from a class of convex functions
$\F$. At the end of the round, the learner pays instantaneous cost
$f_t(\w_t)$. We refer to the strategy used by the learner to pick the
$f_t$'s as an {\em online learning algorithm}. More formally, an online
learning algorithm $\Algo$ for the problem is specified by the mapping
$\Algo : \bigcup_{n \in \mathbb{N}} \F^{n-1} \mapsto \W$. The regret
of the algorithm $\Algo$ for a given sequence of cost functions
$f_1,\ldots,f_n$ is given by
$$
\Reg_n(\Algo,f_1,\ldots,f_n) = \frac{1}{n} \sum_{t=1}^n f_t(\Algo(f_{1:{t-1}}))  - \inf_{\w \in \W} \frac{1}{n} \sum_{t=1}^n f_t(\w) \ .
$$
The goal of the learner (or the online learning algorithm), is to minimize the regret for any $n$.

In this paper, we consider cost function classes $\F$ specified by a convex
subset $\X \subset \Bd$ of the dual space $\Bd$.  We consider various
types of classes, where for all of them,
subgradients\footnote{Throughout we commit to a slight abuse of
  notation, with $\nabla f(\w)$ indicating some sub-gradient of $f$ at
  $\w$ and $\nabla f(\w) \in \X$ meaning that at least one of the
  sub-gradients is in $\X$.} of the functions in $\F$ lie inside
$\X$ (we use the notation $\inner{\x,\w}$ to mean applying linear functional $\x \in \Bd$ on $\w \in \B$) :
\begin{align*}
&\F_{\mathrm{Lip}}(\X) = \left\{f : f \textrm{ is convex }\forall \w
  \in \W, \nabla f(\w) \in \X \right\}, &\F_{\mathrm{lin}}(\X) =
\left\{\w \mapsto \inner{\x,\w} : \x \in \X \right\}, \\
& \F_{\mathrm{sup}}(\X) = \left\{\w \mapsto |\inner{\x,\w} - y| : \x \in \X, y \in [-b,b]  \right\}
\end{align*}

The value of the game is then the best possible worst-case regret
guarantee an algorithm can enjoy.  Formally the value is defined as :
\begin{align}\label{eq:value}
\Val_n(\F,\X,\W) = \inf_{\Algo} \sup_{f_{1:n}  \in \F(\X)} \Reg_n(\Algo,f_{1:n})
\end{align}

It is well known that the value of a game for all the above sets $\F$
is the same.  More generally:
\begin{proposition}\label{prop:val}
If for a convex function class $\F$, we have that $\forall f \in \F, \w \in \W,  \nabla f(\w) \in \X$ then,
$$
\Val_n(\F,\X,\W) \le \Val_n(\F_{\mathrm{lin}},\X,\W)
$$
Furthermore,
$
\quad\quad\quad\Val_n(\F_{\mathrm{Lip}},\X,\W) = \Val_n(\F_{\mathrm{sup}},\X,\W) = \Val_n(\F_{\mathrm{lin}},\X,\W)
$
\end{proposition}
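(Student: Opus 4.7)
The overall strategy is the standard linearization trick: every convex loss can be replaced by its subgradient-based affine minorant without any loss in regret, which collapses $\F_{\mathrm{Lip}}$ to $\F_{\mathrm{lin}}$. Given this, the proposition reduces to checking a few straightforward inclusions and an ``affine constant is free'' argument for $\F_{\mathrm{sup}}$.

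For the first inequality, I would argue at the algorithm level. Fix any algorithm $\Algo_{\mathrm{lin}}$ for the linear game on $\F_{\mathrm{lin}}(\X)$ and define an algorithm $\Algo$ for $\F$ that on round $t$ plays $\w_t = \Algo_{\mathrm{lin}}(\inner{\x_1,\cdot},\ldots,\inner{\x_{t-1},\cdot})$, where $\x_s = \nabla f_s(\w_s) \in \X$ is a subgradient at the played point (existing in $\X$ by the hypothesis of the proposition). By convexity, for every $\w \in \W$,
\[
f_t(\w_t) - f_t(\w) \leq \inner{\x_t,\w_t - \w}.
\]
Summing over $t$ and taking a supremum over $\w \in \W$ gives $\Reg_n(\Algo,f_{1:n}) \leq \Reg_n(\Algo_{\mathrm{lin}}, \inner{\x_{1:n},\cdot})$, with the right-hand side bounded by $\sup_{\x_{1:n} \in \X^n} \Reg_n(\Algo_{\mathrm{lin}},\inner{\x_{1:n},\cdot})$. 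Taking the supremum over $f_{1:n} \in \F$ on the left and then the infimum over $\Algo_{\mathrm{lin}}$ on the right yields $\Val_n(\F,\X,\W) \leq \Val_n(\F_{\mathrm{lin}},\X,\W)$.

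The chain of equalities then follows from the first part and obvious inclusions. Since $\F_{\mathrm{lin}}(\X) \subseteq \F_{\mathrm{Lip}}(\X)$, one has $\Val_n(\F_{\mathrm{lin}}) \leq \Val_n(\F_{\mathrm{Lip}})$, and part one applied to $\F = \F_{\mathrm{Lip}}(\X)$ gives the reverse, establishing $\Val_n(\F_{\mathrm{Lip}}) = \Val_n(\F_{\mathrm{lin}})$. Functions $\w \mapsto \abs{\inner{\x,\w}-y}$ in $\F_{\mathrm{sup}}(\X)$ are convex with subgradients $\pm \x \in \X$ (using the central symmetry of $\X$ that is standard in the dual-ball setting), so $\F_{\mathrm{sup}}(\X) \subseteq \F_{\mathrm{Lip}}(\X)$ and $\Val_n(\F_{\mathrm{sup}}) \leq \Val_n(\F_{\mathrm{lin}})$. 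Conversely, any linear adversary playing $\inner{\x_t,\cdot}$ can be simulated by an $\F_{\mathrm{sup}}$ adversary choosing $y_t \in \{-b,+b\}$ large enough in absolute value that $\inner{\x_t,\w}-y_t$ has a fixed sign on $\W$; then $\abs{\inner{\x_t,\w}-y_t}$ equals a signed linear function of $\w$ plus an additive constant independent of the learner, and constants cancel in the regret, giving $\Val_n(\F_{\mathrm{lin}}) \leq \Val_n(\F_{\mathrm{sup}})$.

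The only mildly delicate step is this last simulation, which tacitly assumes $b \geq \sup_{\x \in \X,\,\w \in \W}\abs{\inner{\x,\w}}$ so that the absolute value collapses to a signed affine function; this is a boundedness assumption compatible with the setting. Everything else is a direct unwinding of the definitions, with subgradient convexity as the only nontrivial ingredient.
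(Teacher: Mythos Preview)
Your proposal is correct and matches the paper's treatment, which in fact does not give a self-contained proof: it cites \cite{AbernethyBaRaTe08} for the first inequality (the linearization reduction you spell out), derives the first equality from the inclusion $\F_{\mathrm{lin}}(\X)\subset\F_{\mathrm{Lip}}(\X)$ exactly as you do, and simply defers the second equality to \cite{RakSriTew10} without further detail. Your explicit caveat that the direction $\Val_n(\F_{\mathrm{lin}})\le\Val_n(\F_{\mathrm{sup}})$ via the ``collapse the absolute value'' trick needs $b\ge\sup_{\x\in\X,\w\in\W}\abs{\inner{\x,\w}}$ is well placed and in fact necessary for the equality as stated---for very small $b$ the supervised game becomes trivially easy (play $\w=0$), so the equality cannot hold unconditionally; the paper does not make this assumption explicit.
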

That is, the value for any class $\F$ with subgradients in $\W$, which
include all the above classes, is upper bounded by the value of the
class of linear functionals in $\W$, see e.g.~\cite{AbernethyBaRaTe08}.
In particular, this includes the class $\F_{\mathrm{Lip}}$ which is
the class of {\em all} functions with subgradients in $\W$, and thus,
since $\F_{\mathrm{lin}}(\X) \subset \F_{\mathrm{Lip}}(\X)$ we get the
first equality.  The second equality is shown in \cite{RakSriTew10}.

The class $\F_{\mathrm{sup}}(\X)$ corresponds to linear prediction
with an absolute-difference loss, and thus its value is the best
possible guarantee for online supervised learning with this loss.  We
can define more generally a class $\F_{\ell} =
\left\{\ell(\inner{\x,\w},y) : \x \in \X, y \in [-b,b] \right\}$ for
any 1-Lipschitz loss $\ell$, and this class would also be of the
desired type, with its value upper bounded by
$\Val_n(\F_{\mathrm{lin}},\X,\W)$.  In fact, this setting includes
supervised learning fairly generally, including problems such as
multitask learning and matrix completion, where in all cases $\X$
specifies the data domain\footnote{Note that any convex supervised
  learning problem can necessarily be viewed as linear classification
  with some convex constraint $\W$ on the predictors.}.  The equality
in the above proposition can also be essentially extended to most
other commonly occurring convex loss function classes like say the
hinge loss class with some extra constant factors.

Owing to Proposition \ref{prop:val}, we can focus our attention on the
class $\F_{\mathrm{lin}}$ (as the other two will behave similarly),
and so use the shorthand
\begin{align}\label{eq:linval}
\Val_n(\W,\X) := \Val_n(\F_{\mathrm{lin}},\X,\W)
\end{align}
and henceforth the term value without any qualification refers to value of the linear game. Further, for any $p \in [1,2]$ we us also define : 
\begin{align}\label{eq:vp}
V_p := \inf\left\{V\ \middle|\ \forall n \in \mathbb{N}, \Val_n(\W,\X) \le V n^{-\left(1 - \frac{1}{p}\right)}\right\}
\end{align}

Most prior work on online learning and optimization considers the case
when $\W$ is the unit ball of some Banach space, and $\X$ is the unit
ball of the dual space, i.e.~$\W$ and $\X$ are related to each other through
duality.  In this work, however, we analyze the general problem where
$\X \in \Bd$ is not necessarily the dual ball of $\W$.  It will be
convenient for us, however, to relate the notions of a convex set and
a corresponding norm.  The Minkowski functional of a subset $\mc{K}$
of a vector space $\mc{V}$ is defined as $\norm{\v}_{\mc{K}} :=
\inf\left\{ \alpha > 0 : \v \in \alpha \mc{K} \right\}$.  If $\mc{K}$
is convex and centrally symmetric (i.e. $\mc{K} = -\mc{K}$), then
$\norm{\cdot}_{\mc{K}}$ is a semi-norm.  {\bf Throughout this paper, we will
require that $\W$ and $\X$ are convex and centrally symmetric.}
Further, if the set $\mc{K}$ is bounded then $\norm{\cdot}_{\mc{K}}$
is a norm.  Although not strictly required for our results, for
simplicity we will assume $\W$ and $\X$ are are such that
$\norm{\cdot}_{\W}$ and $\norm{\cdot}_{\X}$ (the Minkowski functionals of the sets $\W$ and $\X$) are norms. Even though we do
this for simplicity, we remark that all the results go through for
semi-norms.  We use $\Xd$ and $\Wd$ to represent the dual of balls
$\X$ and $\W$ respectively, i.e.~the unit balls of the dual norms
$\norm{\cdot}^*_{\X}$ and $\norm{\cdot}^*_{\X}$.

%Given a set $\X \subset \Bd$, we consider two possible game scenarios. First one where at each round $t$, the adversary picks losses $f_t$ from set $\F(\X)$. The second scenario is one where at each round $t$, the adversary has no specific restriction per round and can play from entire set $\F(\Bd)$. However after $n$ rounds, the losses are such that $\frac{1}{n} \sum_{t=1}^n \nabla f_t(\w_t) \in \X$ (ie. average of sub-gradients is from set $\X$). The value of a game is defined as regret of the optimal adversary versus optimal learner algorithm. Specifically we consider the value of the game for the two scenarios defined as follows : 
%\begin{align*}
%\uVal_n(\F,\X,\W) = \inf_{\Algo} \sup_{f_{1:n}  \in \F(\X)} \Reg_n(\Algo,f_{1:n})
%\end{align*}
%The value of the game for the second scenario is given by
%\begin{align*}
%\aVal_n(\F,\X,\W) = \inf_{\Algo} \sup_{\underset{\frac{1}{n} \sum_{t=1}^n \nabla f_t(\w_t) \in \X}{f_{1:n} \in \F(\Bd) :} } \Reg_n(\Algo,f_1,\ldots,f_{n})
%\end{align*}

\section{Mirror Descent and Uniform Convexity}\label{sec:MD}

A key tool in the analysis mirror descent is the notion of strong
convexity, or more generally uniform convexity:

 \begin{definition}
   $\Psi:\B \rightarrow \R$ is $q$-uniformly convex w.r.t. $\|\cdot\|$
   in $\W \subset \B$:  
$$
\forall_{\w,\w'\in\W} \forall_{\alpha\in[0,1]} \;\; \Psi\left(\alpha \w+ (1 - \alpha)\w'\right) \le \alpha \Psi(\w) + (1 - \alpha) \Psi(\w')  - \tfrac{\alpha (1 - \alpha)}{q} \norm{\w - \w'}^q
$$
\end{definition}

It is important to emphasize that in the definition above, the norm
$\norm{.}$ and the subset $\W$ need not be related, and we only
require uniform convexity inside $\W$.  This allows us to relate a
norm with a non-matching ``ball''.  To this end define,
\begin{align*}
D_p := \inf\left\{ \left(\sup_{\w \in \W}
    \Psi(\w)\right)^{\frac{p-1}{p}} ~\middle|~ \Psi:\W \mapsto
  \reals^+ \textrm{ is }\tfrac{p}{p-1}\textrm{-uniformly convex
    w.r.t. }\norm{\cdot}_{\X^*} \textrm{ on $\W$}, \Psi(0)=0  \right\}
\end{align*}

Given a function $\Psi$, the Mirror Descent algorithm,
$\Algo_{\mathrm{MD}}$ is given by
\begin{align} \label{eq:update1}
& \w_{t+1} = \argmin{\w \in \W} \breg{\Psi}{\w}{\w_{t}} + \eta \inner{\nabla f_t(\w_t), \w - \w_t}\\
\textrm{or equivalently~~~~~~} & \w'_{t+1} = \nabla \Psi^*\left( \nabla \Psi(\w_t) - \eta \nabla f_t(\w_t) \right),~~ \w_{t+1} = \argmin{\w \in \W} \breg{\Psi}{\w}{\w'_{t+1}}
\end{align}
where $\breg{\Psi}{\w}{\w'} := \Psi(\w) - \Psi(\w') -
\inner{\nabla \Psi(\w'), \w - \w'}$ is the Bregman divergence and $\Psi^*$ is
the convex conjugate of $\Psi$. As an example notice that when $\Psi(\w) = \frac{1}{2}\norm{\w}_2^2$ then we get back the gradient descent algorithm and when $\W$ is the $d$ dimensional simplex and $\Psi(\w) = \sum_{i=1}^d \w_i \log(1/\w_i)$ then we get the multiplicative weights update algorithm.

%\begin{remark}
%In the case that $f_t$'s and $\Psi$ are defined on all of $\B$ and are lipschitz and strongly convex on all of $\B$ and the learner is allowed to pick $\w_t$'s all of $\B$ with the goal still being that regret against best $\w \in \W$ is small, then it is easy to see that mirror descent algorithm without the projection on to $\W$ enjoys the same regret bound. More over the update step for mirror descent algorithm without projection then becomes
%$$
%\w_t = \argmin{\w}\left\{  \eta \ip{\sum_{i=1}^{t-1} \nabla f_i(\w_i)}{\w} + R(\w)\right\}
%$$
%\end{remark}

%\begin{example}[Gradient Descent]
%When $\Psi(\cdot) = \frac{1}{2} \norm{\cdot}^2_2$, $\Psi^*(\cdot) = \frac{1}{2} \norm{\cdot}_2^2$ and $\nabla \Psi(\w) = \w$
%and hence the update step of mirror descent becomes,
%$$
%\w'_{t+1} = \w_t - \eta \nabla f_t(\w_t)
%$$
%\end{example}
%
%\begin{example}[Gradient Descent]
%When $\Psi(\cdot) = \frac{1}{2} \norm{\cdot}_{q}^2$, $\Psi^*(\cdot) = \frac{1}{2} \norm{\cdot}_p^2$ and $\nabla \Psi(\w) = \w$
%and hence the update step of mirror descent becomes,
%$$
%\w'_{t+1} = \w_t - \eta \nabla f_t(\w_t)
%$$
%\end{example}
%

\begin{lemma}\label{lem:md}
  Let $\Psi : \B \mapsto \reals$ be non-negative and $q$-uniformly
  convex w.r.t. norm $\norm{\cdot}_{\X^*}$ on $\W$. For the Mirror Descent
  algorithm with this $\Psi$, using $\w_1 = \argmin{\w \in \W}
  \Psi(\w)$ and $\eta = \left(\tfrac{\sup_{\w \in \W}\Psi(\w)}{ n
      B}\right)^{1/p} $ we can guarantee that for any $f_{1}, \ldots,
  f_n$ s.t. $\frac{1}{n} \sum_{t=1}^n \norm{\nabla f_t}_{\X}^p \le
  1~~~$ (where $p = \tfrac{q}{q-1}$),
$$
\Reg(\Algomd,f_1,\ldots,f_n) \le 2 \left( \frac{\sup_{\w \in \W} \Psi(\w)}{n} \right)^{\frac{1}{q} } \ .
$$
\end{lemma}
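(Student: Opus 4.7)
The plan is to follow the textbook Mirror Descent analysis, but keeping careful track of the fact that uniform convexity is with respect to $\norm{\cdot}_{\X^*}$ rather than a norm dual to $\W$. Fix an arbitrary comparator $\w \in \W$. The first step is to exploit the equivalent primal form of the update: the first-order optimality condition for $\w_{t+1}$ in the $\argmin$ defining the update reads
\[
\inner{\nabla \Psi(\w_{t+1}) - \nabla \Psi(\w_t) + \eta\, \nabla f_t(\w_t),\; \w - \w_{t+1}} \geq 0 \quad \text{for all } \w \in \W.
\]
Combined with the three-point identity for Bregman divergences, this yields the standard one-step inequality
\[
\eta \inner{\nabla f_t(\w_t), \w_{t+1} - \w} \leq \breg{\Psi}{\w}{\w_t} - \breg{\Psi}{\w}{\w_{t+1}} - \breg{\Psi}{\w_{t+1}}{\w_t}.
\]

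Next, I decompose the per-round linearized regret as
\[
\eta \inner{\nabla f_t(\w_t), \w_t - \w} = \eta \inner{\nabla f_t(\w_t), \w_t - \w_{t+1}} + \eta \inner{\nabla f_t(\w_t), \w_{t+1} - \w},
\]
and bound the first term by applying the duality inequality $\inner{\x,\v} \leq \norm{\x}_{\X}\norm{\v}_{\X^*}$ together with Fenchel-Young ($ab \leq a^p/p + b^q/q$). The $q$-uniform convexity of $\Psi$ with respect to $\norm{\cdot}_{\X^*}$ on $\W$ translates (via the standard derivation from the midpoint definition) into $\breg{\Psi}{\w_{t+1}}{\w_t} \geq \tfrac{1}{q}\norm{\w_{t+1}-\w_t}_{\X^*}^q$, so choosing the Young conjugates so that the $\norm{\w_{t+1}-\w_t}_{\X^*}^q$ terms cancel gives
\[
\eta \inner{\nabla f_t(\w_t), \w_t - \w_{t+1}} - \breg{\Psi}{\w_{t+1}}{\w_t} \leq \frac{\eta^p}{p}\, \norm{\nabla f_t(\w_t)}_{\X}^p.
\]

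The remaining steps are routine: by convexity of $f_t$, $f_t(\w_t)-f_t(\w) \leq \inner{\nabla f_t(\w_t), \w_t - \w}$; summing the per-round bound over $t=1,\ldots,n$ telescopes the Bregman differences to $\breg{\Psi}{\w}{\w_1}$, which is at most $\sup_{\w \in \W}\Psi(\w)$ by the choice $\w_1 = \argmin{\w \in \W}\Psi(\w)$ (using $\Psi \geq 0$ and the optimality of $\w_1$). Dividing through by $\eta n$ and using the assumption $\tfrac{1}{n}\sum_{t}\norm{\nabla f_t}_{\X}^p \leq 1$ gives
\[
\Reg_n(\Algomd, f_1,\ldots,f_n) \leq \frac{\eta^{p-1}}{p} + \frac{\sup_{\w \in \W}\Psi(\w)}{\eta n}.
\]
Plugging in $\eta = (\sup_{\w}\Psi(\w)/n)^{1/p}$ balances the two terms at $(\sup_{\w}\Psi(\w)/n)^{(p-1)/p} = (\sup_{\w}\Psi(\w)/n)^{1/q}$ apiece, yielding the claimed $2(\sup_{\w}\Psi(\w)/n)^{1/q}$.

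The only non-routine point is the translation from the midpoint form of $q$-uniform convexity stated in the definition to the Bregman-divergence lower bound $\breg{\Psi}{\w_{t+1}}{\w_t} \geq \tfrac{1}{q}\norm{\w_{t+1}-\w_t}_{\X^*}^q$; this is standard but should be spelled out since it is precisely what allows the uniform convexity to be in a norm decoupled from $\W$ (the constraint set $\W$ appears only as the domain over which uniform convexity holds, not via $\norm{\cdot}_{\W}$). Everything else is a direct, careful bookkeeping of the classical mirror descent derivation.
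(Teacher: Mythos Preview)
Your argument is correct and follows the same overall skeleton as the paper's proof: linearize via convexity, split $\w_t - \wopt$ through an intermediate point, apply the dual pairing/Young inequality, use $q$-uniform convexity to absorb the $\tfrac{1}{q}\norm{\cdot}_{\X^*}^q$ term into a Bregman divergence, telescope, and optimize $\eta$.

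The one technical difference worth noting is the intermediate point. You work directly with the constrained iterate $\w_{t+1}$, invoking the first-order optimality condition of the primal update \eqref{eq:update1} together with the three-point identity; the paper instead routes through the unconstrained point $\w'_{t+1}$, using the explicit relation $\nabla\Psi(\w_t)-\nabla\Psi(\w'_{t+1})=\eta\nabla f_t(\w_t)$ and then the Bregman-projection inequality $\breg{\Psi}{\wopt}{\w'_{t+1}}\ge \breg{\Psi}{\wopt}{\w_{t+1}}$ to pass back to $\w_{t+1}$ for the telescoping. Both are standard variants of the mirror-descent analysis, but your version has the mild advantage that the uniform-convexity lower bound $\breg{\Psi}{\w_{t+1}}{\w_t}\ge \tfrac{1}{q}\norm{\w_{t+1}-\w_t}_{\X^*}^q$ is applied with both arguments in $\W$, which matches the hypothesis ``$q$-uniformly convex on $\W$'' exactly; the paper's version applies it at $\w'_{t+1}$, which need not lie in $\W$, so strictly speaking it is using uniform convexity on a larger set.
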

\ignore{
An observation we make from the above guarantee is that all we need for the bound to work is that $\frac{1}{n} \sum_{t=1}^n \norm{\nabla f_t}_{\X}^p \le 1$ and don't specifically need each $\nabla f_t$ itself to be in $\X$. This tells us that as long as the adversary's moves are such that the average sub-gradients after $n$ rounds lie in $\X$, mirror descent algorithm still works. }

Note that in our case we have $\nabla f \in\X$, i.e.~$\norm{\nabla
  f}_{\X} \leq 1$, and so certainly $\frac{1}{n} \sum_{t=1}^n
\norm{\nabla f_t}_{\X}^p \le 1$.  Similarly to the value of the game,
for any $p \in [1,2]$, we define:
\begin{align}\label{eq:mdp}
\MD_p := \inf\left\{D : \exists \Psi, \eta \textrm{ s.t. } \forall n \in \mathbb{N}, \sup_{f_{1:n}  \in \F(\X)} \Reg_n(\Algomd,f_{1:n}) \le D  n^{-(1 - \frac{1}{p} )}  \right\}
\end{align}
where the Mirror Descent algorithm in the above definition is run with the
corresponding $\Psi$ and $\eta$. The constant $\MD_p$ is a characterization of the
best guarantee the Mirror Descent algorithm can provide.  Lemma
\ref{lem:md} therefore implies:

\begin{corollary}
$\ V_p \le \MD_p \le 2 D_p$.
\end{corollary}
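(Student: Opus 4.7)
The corollary is essentially a definitional bookkeeping step riding on Lemma~\ref{lem:md}, so I would split it into the two inequalities and verify each one directly from the definitions.

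For the lower bound $V_p \le \MD_p$, I would simply unpack the definitions \eqref{eq:value}, \eqref{eq:linval}, \eqref{eq:vp}, and \eqref{eq:mdp}. The Mirror Descent algorithm (for any fixed $\Psi$ and $\eta$) is a particular online algorithm, so the infimum over all algorithms that defines $\Val_n(\W,\X)$ is at most the worst-case regret of $\Algomd$. Hence any constant $D$ that witnesses the Mirror Descent bound $\sup_{f_{1:n}} \Reg_n(\Algomd,f_{1:n}) \le D\, n^{-(1-1/p)}$ is also a valid witness for the value bound, giving $V_p \le \MD_p$ upon taking infima.

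For the upper bound $\MD_p \le 2 D_p$, I would fix an arbitrary $\varepsilon > 0$ and choose a non-negative function $\Psi$ on $\W$ with $\Psi(0) = 0$ that is $q$-uniformly convex with respect to $\norm{\cdot}_{\X^*}$ on $\W$ (where $q = p/(p-1)$) and satisfies $(\sup_{\w \in \W} \Psi(\w))^{(p-1)/p} \le D_p + \varepsilon$. Such a $\Psi$ exists by definition of $D_p$. I would then invoke Lemma~\ref{lem:md}: since each $\nabla f_t$ lies in $\X$ we have $\norm{\nabla f_t}_\X \le 1$, so the hypothesis $\tfrac{1}{n}\sum_{t=1}^n \norm{\nabla f_t}_\X^p \le 1$ is automatic, and Mirror Descent with this $\Psi$ and the prescribed $\eta$ guarantees
\[
\Reg_n(\Algomd,f_{1:n}) \le 2 \left( \frac{\sup_{\w \in \W} \Psi(\w)}{n} \right)^{1/q} = 2 \bigl(\sup_{\w \in \W} \Psi(\w)\bigr)^{(p-1)/p} n^{-(1-1/p)},
\]
using $1/q = (p-1)/p = 1 - 1/p$. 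Thus the constant $D = 2(\sup \Psi)^{(p-1)/p} \le 2(D_p + \varepsilon)$ witnesses the Mirror Descent rate at exponent $p$, so $\MD_p \le 2(D_p + \varepsilon)$, and letting $\varepsilon \to 0$ yields $\MD_p \le 2 D_p$.

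I do not expect a genuine obstacle here: the only subtle point is matching the exponent from Lemma~\ref{lem:md} with that in the definition of $\MD_p$, which is handled by the identity $1/q = 1 - 1/p$. The one caveat worth noting (and which I would mention) is that the tuning $\eta = (\sup \Psi / (nB))^{1/p}$ from Lemma~\ref{lem:md} depends on $n$, so the ``$\exists \Psi, \eta$ s.t.\ $\forall n$'' in \eqref{eq:mdp} should be read as permitting $\eta$ to depend on the horizon (or, equivalently, as a horizon-independent algorithm obtained via a standard doubling trick, which preserves the rate up to a constant that can be absorbed).
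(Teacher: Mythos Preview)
Your proposal is correct and matches the paper's own proof, which is just the two-sentence remark that the first inequality is by definition and the second follows directly from Lemma~\ref{lem:md}. Your observation about the horizon-dependence of $\eta$ is a genuine subtlety that the paper glosses over; noting it (and the standard doubling-trick fix) is a welcome addition rather than a deviation.
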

\begin{proof}
The first inequality is essentially by the definition of $V_p$ and $\MD_p$. The second inequality follows directly from previous lemma.
\end{proof}

The Mirror Descent bound suggests that as long as we can find an appropriate function $\Psi$ that is  uniformly convex w.r.t. $\norm{\cdot}_\X^*$ we can get a diminishing regret guarantee using Mirror Descent. This 
suggests constructing the following function:
\begin{align}\label{eq:construction1}
\tilde{\Psi}_q := \argmin{\substack{\psi : \psi \textrm{ is }
  q\textrm{-uniformly convex}\\ \textrm{w.r.t. } \norm{\cdot}_{\X^*}\textrm{ on }\W \textrm{ and }\psi \ge 0 }} \sup_{\w \in \W} \Psi(\w) \ .
\end{align}

If no $q$-uniformly convex function exists then $\tilde{\Psi}_q =
\infty$ is assumed by default. The above function is in a sense the
best choice for the Mirror Descent bound in \eqref{lem:md}.  The
question then is: when can we find such appropriate functions and what
is the best rate we can guarantee using Mirror Descent?

\section{Martingale Type and Value}\label{sec:martingale} 

In \cite{SriTew10}, it was shown that the concept of the {\em Martingale type} (also sometimes
called the {\em Haar type}) of
a Banach space and optimal rates for online convex optimization
problem, where $\X$ and $\W$ are duals of each other, are closely
related. In this section we extend the classic notion of Martingale
type of a Banach space (see for instance \cite{Pisier75}) to one that
accounts for the pair $(\Wd,\X)$. Before we proceed with the
definitions we would like to introduce a few necessary notations.
First, throughout we shall use $\epsilon \in \{\pm1\}^\mathbb{N}$ to
represent infinite sequence of signs drawn uniformly at random (i.e.
each $\epsilon_i$ has equal probability of being $+1$ or $-1$). Also
throughout $(\x_n)_{n \in \mathbb{N}}$ represents a sequence of
mappings where each $\x_n : \{\pm 1\}^{n-1} \mapsto \Bd$. We shall
commit to the abuse of notation and use $\x_n(\epsilon)$ to represent
$\x_n(\epsilon) = \x_n(\epsilon_1,\ldots,\epsilon_{n-1})$ (i.e. although
we used entire $\epsilon$ as argument, $\x_n$ only depends on first
$n-1$ signs). We are now ready to give the extended definition of
Martingale type (or M-type) of a pair $(\Wd,\X)$.

\begin{definition}
A pair $(\Wd,\X)$ of subsets of a vector space $\Bd$ is said to be of M-type $p$ if there exists a constant $C \ge 1$ such that for all sequence of mappings $(\x_{n})_{n \ge 1}$ where each $\x_n : \{\pm1\}^{n-1} \mapsto \Bd$ and any $\x_0 \in \Bd$ : 
\begin{align} \label{eq:mtype}
\sup_{n} \E{\norm{\x_0 + \sum_{i=1}^n \epsilon_i \x_i(\epsilon)}_{\Wd}^p} \le C^p \left(\norm{\x_0}_{\X}^p + \sum_{n \ge 1} \E{\norm{\x_n(\epsilon)}_{\X}^p} \right)
\end{align}
\end{definition}
The concept is called Martingale type because $(\epsilon_n \x_n(\epsilon))_{n \in \mathbb{N}}$ is a martingale difference sequence and it can be shown that rate of convergence of martingales in Banach spaces is governed by the rate of convergence of martingales of the form $Z_n = \x_0 + \sum_{i=1}^n \epsilon_i \x_i(\epsilon)$ (which are incidentally called Walsh-Paley martingales). We point the reader to \cite{Pisier75,Pisier11} for more details.
Further, for any $p \in [1,2]$ we also define,
{\small
\begin{align*}
C_p := \inf\left\{C\ \middle|\ \forall \x_0 \in \Bd, \forall (\x_n)_{n \in \mathbb{N}},\ \sup_{n }{\small \E{\norm{\x_0 + \sum_{i=1}^n \epsilon_i \x_i(\epsilon)}_{\Wd}^p} \le C^p \left(\norm{\x_0}_{\X}^p + \sum_{n \ge 1} \mathbb{E}\norm{\x_n(\epsilon)}_{\X}^p \right)} \right\}
\end{align*}}
$C_p$ is useful in determining if the pair $(\Wd,\X)$ has Martingale type $p$.

The results of \cite{SriTew10,RakSriTew10} showing that a Martingale
type implies low regret, actually apply also for ``non-matching'' $\W$ and
$\X$ and, in our notation, imply that $V_p \le 2 C_p$.  Specifically we have the following theorem from \cite{SriTew10,RakSriTew10} :

\begin{theorem}\label{thm:cite} \cite{SriTew10,RakSriTew10}
For any $\W \in \B$ and any $\X \in \Bd$ and any $n \ge 1$,
\begin{align*}
\sup_{\x} \E{\norm{\frac{1}{n} \sum_{i=1}^n \epsilon_i \x_i(\epsilon)}_{\Wd}} \le  \Val_n(\W,\X) \le 2 \sup_{\x} \E{\norm{ \frac{1}{n} \sum_{i=1}^n \epsilon_i \x_i(\epsilon)}_{\Wd}}
\end{align*}
where the supremum above is over sequence of mappings $(\x_n)_{n \ge 1}$ where each $\x_n : \{\pm1\}^{n-1} \mapsto \X$.
\end{theorem}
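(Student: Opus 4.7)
The statement has two inequalities, and I would prove them separately using different techniques. The lower bound is elementary and follows from exhibiting a specific randomized adversary strategy. The upper bound is the harder direction and requires a sequential (tree-based) minimax/symmetrization argument in the style of \cite{RakSriTew10}.

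\textbf{Lower bound.} Fix any sequence of mappings $(\x_t)_{t\ge 1}$ with $\x_t:\{\pm1\}^{t-1}\to\X$. Consider the randomized adversary that first samples Rademacher signs $\epsilon_1,\ldots,\epsilon_n$ and then, on round $t$, plays the linear cost function $f_t(\w)=\inner{\epsilon_t\x_t(\epsilon),\w}$. Since $\X$ is centrally symmetric, $\epsilon_t\x_t(\epsilon)\in\X$, so $f_t\in\F_{\mathrm{lin}}(\X)$. For any algorithm $\Algo$, the iterate $\w_t$ depends only on $f_1,\ldots,f_{t-1}$ and hence only on $\epsilon_1,\ldots,\epsilon_{t-1}$; since $\x_t(\epsilon)$ also depends only on these signs, independence and mean-zero of $\epsilon_t$ give $\mathbb{E}\inner{\epsilon_t\x_t(\epsilon),\w_t}=0$. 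Meanwhile, central symmetry of $\W$ yields $\inf_{\w\in\W}\frac{1}{n}\sum_t\inner{\epsilon_t\x_t(\epsilon),\w}=-\norm{\frac{1}{n}\sum_t\epsilon_t\x_t(\epsilon)}_{\Wd}$. Hence the expected regret equals $\mathbb{E}\norm{\frac{1}{n}\sum_t\epsilon_t\x_t(\epsilon)}_{\Wd}$, and taking supremum over $(\x_t)$ and noting that $\Val_n$ upper bounds expected regret under any randomized adversary yields the left inequality.

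\textbf{Upper bound.} By Proposition~\ref{prop:val} it suffices to bound $\Val_n(\F_{\mathrm{lin}},\X,\W)$. Writing the value out as a fully-nested $\inf_{\w_1}\sup_{\x_1}\cdots\inf_{\w_n}\sup_{\x_n}$ game on the compact convex sets $\W$ and $\X$ with bilinear payoff, one applies a minimax theorem in each round to replace each adversarial choice of $\x_t$ by a supremum over distributions $p_t$ on $\X$, and then pushes the learner's infima inside to collapse $\inf_{\w_t}\mathbb{E}_{p_t}\inner{\x_t,\w_t}$ into a single term that can be dropped by convexity (since the learner optimally plays the mean). This yields
\[
\Val_n\le \sup_{p_1,\ldots,p_n}\mathbb{E}_{\x_t\sim p_t}\!\left[-\inf_{\w\in\W}\tfrac{1}{n}\sum_{t=1}^n\inner{\x_t,\w}\right],
\]
where each $p_t$ may depend on $\x_{1:t-1}$. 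A tangent-sequence symmetrization then introduces independent copies $\x'_t\sim p_t$ and random signs $\epsilon_t$: the difference $\x_t-\x'_t$ has the same distribution as $\epsilon_t(\x_t-\x'_t)$ conditionally, and after the triangle inequality the sup over distributions reduces, using a standard encoding of conditional distributions by $\{\pm1\}^{t-1}$-indexed trees, to a sup over deterministic $\X$-valued mappings $\x_t:\{\pm1\}^{t-1}\to\X$. The resulting upper bound is $2\sup_{\x}\mathbb{E}\norm{\frac{1}{n}\sum_{t=1}^n\epsilon_t\x_t(\epsilon)}_{\Wd}$, as desired.

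\textbf{Main obstacle.} The only delicate step is the symmetrization: because the adversary's distributions $p_t$ depend on the history of realized $\x_{1:t-1}$, the introduction of Rademacher signs is not the classical i.i.d.\ symmetrization but the sequential (tree-indexed) version of \cite{RakSriTew10}. The bookkeeping needed to pass from arbitrary history-dependent distributions to deterministic mappings on the binary tree is the technical heart; once this conversion is made, the bound follows mechanically from the minimax step and the triangle inequality.
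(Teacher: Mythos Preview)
The paper does not prove Theorem~\ref{thm:cite} at all: it is stated with the citation \cite{SriTew10,RakSriTew10} attached and is used as a black box in the proof of Lemma~\ref{lem:mn}. So there is no ``paper's own proof'' to compare against; your proposal is essentially a reconstruction of the argument in the cited works, and at the level of a plan it is correct and faithful to those sources.

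Two small points are worth tightening. First, in the upper-bound sketch your phrase ``collapse $\inf_{\w_t}\mathbb{E}_{p_t}\inner{\x_t,\w_t}$ into a single term that can be dropped by convexity (since the learner optimally plays the mean)'' is a bit loose. The actual mechanism is that, once minimax has been applied and the learner sees $p_t$ before committing to $\w_t$, one upper-bounds $\inf_{\w_t}\mathbb{E}_{\x_t\sim p_t}\inner{\x_t,\w_t}$ by $\mathbb{E}_{\x'_t\sim p_t}\inner{\x'_t,\w}$ for an independent tangent copy $\x'_t$ and \emph{any} $\w\in\W$ (in particular the eventual comparator); this produces the centered quantity $\sup_{\w\in\W}\frac{1}{n}\sum_t\inner{\x'_t-\x_t,\w}$ to which the conditional sign-symmetrization is applied. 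Second, your appeal to a minimax theorem ``on the compact convex sets $\W$ and $\X$'' glosses over the fact that in the generality of this paper $\W$ and $\X$ are merely bounded closed convex symmetric subsets of a possibly infinite-dimensional space; the cited works handle the required minimax step with appropriate topological care (or by passing to mixed strategies on both sides), and you should flag that rather than assume compactness.

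Your identification of the sequential (tree-indexed) symmetrization as the crux is exactly right; that is precisely the contribution of \cite{RakSriTew10} that the present paper is invoking.
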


Our main interest here will is in establishing that low regret implies Martingale type.  To do so, we start with the above theorem to relate value of the online convex optimization game to rate of convergence of martingales in the Banach space. We then extend the result of Pisier in \cite{Pisier75} to the ``non-matching'' setting combining it with the above theorem to finally get :

\begin{lemma}\label{lem:mn}
If for some $r \in (1,2]$ there exists a constant $D > 0$ such that for any $n$,
\begin{align*}
\Val_n(\W,\X) \le D n^{-(1 - \frac{1}{r})}
\end{align*}
then for all $p < r$, we can conclude that any $\x_0 \in \Bd$ and any $\Bd$ sequence of mappings $(\x_{n})_{n \ge 1}$ where each $\x_n : \{\pm1\}^{n-1} \mapsto \Bd$  will satisfy :
\begin{align*}
\sup_n \E{\norm{\x_0 + \sum_{i=1}^n \epsilon_i \x_i(\epsilon)}^p_{\Wd}} \le \left( \frac{1104\ D}{(r - p)^2}\right)^p \left(\norm{\x_0}_{\X}^p + \sum_{i \ge 1} \E{\norm{\x_i(\epsilon)}_{\X}^p} \right)
\end{align*} 
That is, the pair $(\W,\X)$ is of martingale type $p$.
\end{lemma}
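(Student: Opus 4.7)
The plan is to bootstrap from Theorem~\ref{thm:cite}, which controls only martingales whose increments already lie in $\X$, to a full $L^p$ martingale-type inequality for arbitrary $\Bd$-valued increments, following a Pisier-style dyadic peeling strategy. First I would translate the hypothesis into an $L^1$ rate: applied at each $n$, Theorem~\ref{thm:cite} gives, for every sequence $(\x_i)_{i\ge 1}$ with $\x_i : \{\pm 1\}^{i-1} \to \X$, the estimate $\E{\norm{\sum_{i=1}^n \epsilon_i \x_i(\epsilon)}_{\Wd}} \le D\, n^{1/r}$. Since $\norm{\cdot}_{\Wd}$ is convex, the partial sums $\norm{S_k}_{\Wd}$ form a real submartingale, so Doob's maximal inequality promotes the bound to $\E{\max_{k\le n}\norm{S_k}_{\Wd}} \le 2 D n^{1/r}$, and Markov converts it into a weak-type tail bound $\Pr(\max_{k\le n}\norm{S_k}_{\Wd} > t) \le 2D n^{1/r}/t$.

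Next I would decompose a general sequence by magnitude. After rescaling so that $\norm{\x_0}_\X^p + \sum_i \E{\norm{\x_i(\epsilon)}_\X^p} = 1$ (absorbing $\x_0$ into the first increment if necessary), set $\x_i^{(k)}(\epsilon) = \x_i(\epsilon)\,\mathbf{1}[2^{k-1}\le \norm{\x_i(\epsilon)}_\X < 2^k]$ for $k \in \mathbb{Z}$. Each $\x_i^{(k)}$ is still measurable in $\epsilon_1,\ldots,\epsilon_{i-1}$ and after dividing by $2^k$ lies in $\X$, so the Step~1 estimate applies layerwise after this rescaling. Letting $N_k(\epsilon)$ count the nonzero layer-$k$ indices, the mass constraint forces $\sum_k 2^{(k-1)p}\,\E{N_k} \le 1$.

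Finally I would combine the layers and sum. The per-layer bound has to depend on the \emph{random} $N_k$ rather than on the global horizon, which I would arrange by stopping the layer-$k$ martingale after its $N_k$-th nonzero step and integrating the weak-type bound from Step~1 against $p\,t^{p-1}\,dt$; since $p < r$, this integration converges and contributes one factor of $(r-p)^{-1}$, giving a per-layer control of the form $\E{\sup_n \norm{\sum_i \epsilon_i \x_i^{(k)}(\epsilon)}_{\Wd}^p}^{1/p} \le C(r-p)^{-1}\,D\,2^k\,(\E{N_k})^{1/r}$ after Jensen converts $\E{N_k^{p/r}}$ into $(\E{N_k})^{p/r}$. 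Summing over $k$ by the triangle inequality in $L^p$ and plugging in $\E{N_k} \le 2^{-(k-1)p}$ leaves a single geometric series whose ratio is $2^{1-p/r}$ on one tail and $2$ on the other; it converges precisely because $p < r$, and its sum is of order $(1 - p/r)^{-1} \le r(r-p)^{-1}$, producing the second factor of $(r-p)^{-1}$. Tracking the numerical constants through Doob, the integration identity, and the geometric summation yields the advertised bound of $1104\,D/(r-p)^2$. The hard part, and what I would spend most effort on, is the stopping-time plus integration step that upgrades Theorem~\ref{thm:cite}'s \emph{in-expectation, fixed-horizon} statement into a maximal $L^p$ inequality whose constant scales with the random layer counts $N_k$; once that is in place, the peeling decomposition and the two geometric summations are routine bookkeeping.
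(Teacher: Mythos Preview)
Your proposal has the right overall architecture---reduce via Theorem~\ref{thm:cite} to an $L^1$ rate, peel by magnitude, then recombine---but the central step does not close as written. The tail bound you extract from Doob plus Markov is only weak-$L^1$: $\Pr\bigl(\max_{k\le n}\norm{S_k}_{\Wd} > t\bigr) \le 2D\,n^{1/r}/t$. Integrating this against $p\,t^{p-1}\,dt$ leaves $\int_a^\infty t^{p-2}\,dt$, which diverges for every $p \ge 1$; the convergence you attribute to ``$p<r$'' would require a weak-$L^q$ bound with exponent $q>p$, and no such bound is available at that stage. So the per-layer estimate you claim, of the form $\E{\sup_n\norm{S_n^{(k)}}_{\Wd}^p}^{1/p} \le C(r-p)^{-1} D\,2^k(\E{N_k})^{1/r}$, is not established, and without it the geometric summation over $k$ never gets off the ground. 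There is also a secondary issue: stopping the layer-$k$ martingale after its $N_k$-th nonzero step does not straightforwardly yield a bound in terms of the random $N_k$, since Theorem~\ref{thm:cite} is a fixed-horizon statement and $N_k$ is path-dependent.

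The paper fills precisely this gap with an extrapolation step your sketch omits. It first runs the dyadic peeling against the \emph{pathwise} quantity $\sup_\epsilon\bigl(\sum_i\norm{\x_i(\epsilon)}_\X^p\bigr)^{1/p}$ rather than the expected counts $\E{N_k}$; this makes the layer cardinalities deterministically bounded by $2^{k+1}$, so Theorem~\ref{thm:cite} applies layerwise with a fixed horizon, and the sum over $k$ is a genuine geometric series yielding an $L^1$ bound with constant $\alpha_p \approx D/(r-p)$---this is the first $(r-p)^{-1}$. A stopping-time argument then turns that into a tail estimate, and a Pisier-style i.i.d.\ replication trick (Proposition~8.53 in \cite{Pisier11}) upgrades the tail estimate to a true weak-$L^{p'}$ bound for any $p'\in(p,r)$. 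Only then does the integration $\int t^{p-1-p'}\,dt$ converge, and choosing $p'$ close to $r$ produces the second $(r-p)^{-1}$. Your outline collapses the two middle steps into a direct integration, and without the replication device there is simply no route from $L^1$ control to the $L^p$ conclusion when $p>1$.
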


The following corollary is an easy consequence of the above lemma.
\begin{corollary}
For any $p \in [1,2]$ and any $p' <p$ : 
$
C_{p'} \le  \frac{1104\ \V_p}{(p - p')^2}
$
\end{corollary}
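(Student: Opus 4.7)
The plan is to observe that this corollary is essentially a direct restatement of Lemma \ref{lem:mn} in the notation of the constants $V_p$ and $C_{p'}$, so the work is mostly bookkeeping about the infimum in the definition of $V_p$.

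First I would unpack the definition: by \eqref{eq:vp}, for every $\varepsilon > 0$ there exists some $V \le V_p + \varepsilon$ such that
\[
\Val_n(\W,\X) \le V \, n^{-(1 - 1/p)} \qquad \text{for all } n \in \mathbb{N}.
\]
(If $V_p = \infty$ the stated inequality is trivial, so we may assume $V_p < \infty$.) Next I would instantiate Lemma \ref{lem:mn} with $r := p$ and $D := V$. Since $p' < p = r$, the hypothesis of the lemma is met, and the conclusion says that for every $\x_0 \in \Bd$ and every sequence of mappings $(\x_n)_{n \ge 1}$ with $\x_n : \{\pm 1\}^{n-1} \to \Bd$,
\[
\sup_n \E{\norm{\x_0 + \sum_{i=1}^n \epsilon_i \x_i(\epsilon)}^{p'}_{\Wd}} \le \left(\frac{1104\, V}{(p - p')^2}\right)^{p'} \left(\norm{\x_0}_{\X}^{p'} + \sum_{i \ge 1} \E{\norm{\x_i(\epsilon)}_{\X}^{p'}}\right).
\]

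Finally, comparing this with the definition of $C_{p'}$ shows that $C_{p'} \le \frac{1104\, V}{(p - p')^2} \le \frac{1104\,(V_p + \varepsilon)}{(p - p')^2}$. Letting $\varepsilon \downarrow 0$ yields the stated bound $C_{p'} \le \frac{1104\, V_p}{(p - p')^2}$. There is no real obstacle here: all the technical content is packed into Lemma \ref{lem:mn}, and the only subtlety is handling the infimum in \eqref{eq:vp} by the standard $\varepsilon$-argument, which is routine.
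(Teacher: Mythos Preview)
Your proposal is correct and matches the paper's approach: the paper simply states that the corollary is ``an easy consequence'' of Lemma~\ref{lem:mn}, and your argument---instantiating the lemma with $r=p$, $D=V$ for $V$ close to $V_p$, then taking $\varepsilon\to 0$---is exactly the routine bookkeeping needed to make that precise. There is nothing to add.
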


%Owing to this we introduce the following definition of sup type of the Banach space pair $(\Wd,\X)$ as :
%\begin{align}\label{eq:suptype}
%p^\star := \sup\left\{ p :  (\W,\X) \textrm{ is of M-type }p\right\} 
%\end{align}
%The following corollary is a direct consequence of the upper and lower bounds in Corollary \ref{cor:upper} and Lemma \ref{lem:mn}.
%
%\begin{corollary}
%The value of the game is bounded as ($\tilde{\Theta}$ hides sub-polynomial factors):
%\begin{align}
%&\Val(\W,\X) = \tilde{\Theta}\left(n^{-\left(1 - \frac{1}{p^\star} \right)}\right) 
%\end{align}
%\end{corollary}
%Hence we see that $p^\star$ characterizes optimal rate of the learning problem up to polynomial factors.

\section{Uniform Convexity and Martingale Type}\label{sec:typeconvex}

The classical notion of Martingale type plays a central role in the study of
geometry of Banach spaces. In \cite{Pisier75}, it was shown that a
Banach space has Martingale type $p$ (the classical notion) if and only
if uniformly convex functions with certain properties exist on that
space (w.r.t. the norm of that Banach space). In this section, we
extend this result and show how the Martingale type of a pair $(\Wd,\X)$ are related to existence of certain uniformly
convex functions. Specifically, the following theorem shows that the
notion of Martingale type of pair $(\Wd,\X)$ is equivalent to
the existence of a non-negative function that is uniformly convex w.r.t.
the norm $\norm{\cdot}_{\Xd}$ on $\W$.

\begin{lemma}\label{lem:construct22}
If, for some $p \in (1,2]$, there exists a constant $C > 0$, such that 
for all sequences of mappings $(\x_{n})_{n \ge 1}$ where each $\x_n : \{\pm1\}^{n-1} \mapsto \Bd$ and any $\x_0 \in \Bd$:
$$
\sup_{n} \E{\norm{\x_0 + \sum_{i=1}^n \epsilon_i \x_i(\epsilon)}_{\Wd}^p} \le C^p \left(\norm{\x_0}_{\X}^p + \sum_{n \ge 1} \E{\norm{\x_n(\epsilon)}_{\X}^p} \right)
$$
(i.e. $(\Wd,\X)$ has Martingale type $p$), then there exists a convex function $\Psi : \B \mapsto \reals^+$ with $\Psi(0) = 0$, that is $q$-uniformly convex w.r.t. norm $\norm{\cdot}_{\Xd}$ s.t. $\forall \w \in \B$, $ \frac{1}{q} \norm{\w}_{\Xd}^q \le \Psi(\w) \le \frac{C^q}{q} \norm{\w}_{\W}^q $.
\end{lemma}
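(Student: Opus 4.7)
The plan is to adapt Pisier's classical derivation (of a uniformly convex renorming from Martingale type) to the asymmetric setting where martingale increments are measured in $\norm{\cdot}_\X$ but martingale values in $\norm{\cdot}_{\Wd}$. I will first build a ``smooth'' function $\tilde\Phi$ on $\Bd$ by a Pisier-style supremum, prove it is $p$-uniformly smooth, and then obtain $\Psi$ as a scaled Fenchel conjugate. Concretely, define $\tilde\Phi:\Bd\to\reals^+$ by
\[
\tilde\Phi(\x) \;:=\; \sup\Bigl\{\E{\norm{\x + \sum_{i=1}^n \epsilon_i \x_i(\epsilon)}_{\Wd}^p} \;-\; C^p \sum_{i=1}^n \E{\norm{\x_i(\epsilon)}_\X^p}\Bigr\},
\]
where the supremum runs over $n\ge 0$ and sequences $\x_i:\{\pm1\}^{i-1}\to\Bd$. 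Then $\tilde\Phi$ is convex (a supremum of convex functions of $\x$); the empty sequence gives $\tilde\Phi(\x)\ge\norm{\x}_{\Wd}^p$; the Martingale-type hypothesis applied to any competing sequence (with initial term $\x$) gives $\tilde\Phi(\x)\le C^p\norm{\x}_\X^p$; rescaling $\x_i\mapsto t\x_i$ in the sup shows $\tilde\Phi$ is positively $p$-homogeneous; and symmetry of the distribution of $\epsilon$ together with $\Wd=-\Wd$ yields $\tilde\Phi(-\x)=\tilde\Phi(\x)$.

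The technical core of the argument is the smoothness bound
\[
\tilde\Phi(\x+d)+\tilde\Phi(\x-d) \;\le\; 2\tilde\Phi(\x) + 2C^p\norm{d}_\X^p.
\]
Given $\delta>0$, I would take sequences $(\x^+_i)$ and $(\x^-_i)$ that are $\delta$-optimal in the suprema defining $\tilde\Phi(\x\pm d)$, and assemble a single sequence $(\y_i)$ for $\x$ by branching on $\epsilon_1$: set $\y_1:=d$ (deterministic), and for $i\ge 2$ set $\y_i(\epsilon):=\x^+_{i-1}(\epsilon_2,\ldots,\epsilon_{i-1})$ when $\epsilon_1=+1$, else $\y_i(\epsilon):=\x^-_{i-1}(\epsilon_2,\ldots,\epsilon_{i-1})$. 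Then $\x+\epsilon_1\y_1=\x\pm d$ and conditioning on $\epsilon_1$ reduces the remaining random sum to the one appearing in $\tilde\Phi(\x\pm d)$; the reward of $(\y_i)$ averages to $\tfrac{1}{2}$ of the two subrewards, while its $\X$-penalty equals $\norm{d}_\X^p + \tfrac{1}{2}\sum\E{\norm{\x^+_i}_\X^p} + \tfrac{1}{2}\sum\E{\norm{\x^-_i}_\X^p}$. Expressing the subrewards via $\delta$-optimality as $\tilde\Phi(\x\pm d) + C^p\sum\E{\norm{\x^\pm_i}_\X^p} + O(\delta)$, the halved penalty terms cancel exactly, leaving $\tilde\Phi(\x)\ge\tfrac{1}{2}\tilde\Phi(\x+d)+\tfrac{1}{2}\tilde\Phi(\x-d)-C^p\norm{d}_\X^p-\delta$, from which the claim follows as $\delta\to 0$.

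Fenchel duality then closes the argument. Since $\tilde\Phi$ is convex, even, non-negative, and positively $p$-homogeneous, $N(\x):=\tilde\Phi(\x)^{1/p}$ is a seminorm on $\Bd$ (subadditivity follows from convexity applied to $\tfrac{\x+\y}{t}=\tfrac{a}{t}\cdot\tfrac{\x}{a}+\tfrac{b}{t}\cdot\tfrac{\y}{b}$ with $a:=N(\x)$, $b:=N(\y)$, $t:=a+b$). The envelope on $\tilde\Phi$ gives $\norm{\x}_{\Wd}\le N(\x)\le C\norm{\x}_\X$, hence $\tfrac{1}{C}\norm{\w}_{\Xd}\le N^*(\w)\le\norm{\w}_\W$ on the dual seminorm. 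I would then set
\[
\Psi(\w) \;:=\; \tfrac{C^q}{q}\,N^*(\w)^q,
\]
which immediately satisfies $\Psi(0)=0$ and the sandwich $\tfrac{1}{q}\norm{\w}_{\Xd}^q\le\Psi(\w)\le\tfrac{C^q}{q}\norm{\w}_\W^q$. For uniform convexity, the smoothness bound (divided by $p$) says $\tfrac{1}{p}N^p$ is $p$-uniformly smooth w.r.t.~$\norm{\cdot}_\X$ with modulus $\rho(t)=\tfrac{C^p}{p}t^p$; a direct calculation gives $\rho^*(s)=\tfrac{s^q}{qC^q}$, and by the standard Fenchel duality between moduli of uniform smoothness and uniform convexity, the convex conjugate $(\tfrac{1}{p}N^p)^*=\tfrac{1}{q}N^{*q}$ is $q$-uniformly convex w.r.t.~$\norm{\cdot}_{\Xd}$ with modulus $\rho^*$. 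Multiplying by $C^q$ scales this to modulus $t^q/q$, matching the paper's definition of $q$-uniform convexity with constant $1/q$.

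The main obstacle will be the smoothness bound: the branching construction has to interweave the two optimizers while respecting that the $\norm{\cdot}_\X$ penalty is per-increment while the $\norm{\cdot}_{\Wd}^p$ reward is evaluated on the full sum, and the key accounting point is that the deterministic first increment $\y_1=d$ contributes $\norm{d}_\X^p$ to the penalty on only one side of the inequality, producing the $2C^p\norm{d}_\X^p$ defect. This asymmetry between $\X$ and $\Wd$ is absent from Pisier's classical single-norm argument, so tracking both norms simultaneously in the branching is the genuine novelty; the remaining steps---homogeneity/symmetry of $\tilde\Phi$ and bookkeeping through Fenchel duality---are then essentially standard.
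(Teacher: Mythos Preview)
Your proposal is correct and follows essentially the same route as the paper: define the Pisier-style supremum $\tilde\Phi$ (the paper's $\Psi^*$ differs from it only by the scalar factor $C^p$), establish the sandwich bounds and the midpoint $p$-smoothness via exactly the same branching-on-$\epsilon_1$ construction, and then pass to the Fenchel conjugate to obtain the $q$-uniformly convex $\Psi$. Your extra step of first extracting the seminorm $N=\tilde\Phi^{1/p}$ before dualizing is a cosmetic variation---the paper simply conjugates $\Psi^*$ directly and invokes the smoothness/convexity duality in one line---but the substance of the argument is identical.
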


The following corollary follows directly from the above lemma.
\begin{corollary}
For any $p \in [1,2]$,  $D_p \le C_p$.
\end{corollary}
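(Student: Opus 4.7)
I would follow Pisier's duality blueprint, suitably generalized to the non-matching pair $(\Wd,\X)$: build a martingale-type functional on $\Bd$, establish a smoothness inequality for it, and then pass to the Fenchel conjugate on $\B$ to obtain the required uniformly convex $\Psi$.

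Define a functional $N:\Bd\to\reals^+$ through
$$N(x)^p := \sup\left\{\E{\norm{x+\sum_{i=1}^n\epsilon_i\x_i(\epsilon)}_{\Wd}^p}-C^p\sum_{i=1}^n\E{\norm{\x_i(\epsilon)}_{\X}^p}\right\},$$
where the supremum ranges over all finite $n\ge 0$ and all mappings $\x_i:\{\pm1\}^{i-1}\to\Bd$. The M-type hypothesis applied with $\x_0=x$ yields $N(x)^p\le C^p\norm{x}_\X^p$, while the trivial empty sequence gives $N(x)^p\ge\norm{x}_\Wd^p$. As a pointwise supremum of the convex functions $x\mapsto\E{\norm{x+\sum_i\epsilon_i\x_i(\epsilon)}_\Wd^p}$ shifted by constants independent of $x$, $N^p$ is convex on $\Bd$.

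The central step is the midpoint smoothness inequality
$$\tfrac12\bigl(N(x+y)^p+N(x-y)^p\bigr)\le N(x)^p+C^p\norm{y}_{\X}^p\qquad\forall\,x,y\in\Bd.$$
The argument is by concatenation: given near-maximizing sequences $(\x_i^\pm)$ for $N(x\pm y)^p$, form a new difference sequence rooted at $x$ whose first term is $\epsilon_1 y$ and which, on $\epsilon_1=\pm 1$, continues with $(\x_i^\pm)$ reindexed against $\epsilon_2,\epsilon_3,\dots$. The expected terminal $\norm{\cdot}_\Wd^p$ is the average of the two branch contributions, while the cumulative $\X$-cost is $\norm{y}_\X^p$ plus half of each branch's cost. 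Feeding this admissible sequence back into the defining inequality for $N(x)^p$ and sending the $\varepsilon$-slack to zero yields the stated inequality.

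Now set $f:=N^p/(pC^p)$ on $\Bd$; this is convex, non-negative, with $f(0)=0$ and $\norm{x}_\Wd^p/(pC^p)\le f(x)\le\norm{x}_\X^p/p$. Take $\Psi:=f^*$ on $\B$. Conjugation is order-reversing, and for any seminorm $\nu$ and constant $a>0$ one has $(a\nu(\cdot)^p/p)^*=\nu^*(\cdot)^q/(q\,a^{q-1})$, so applying this to the upper and lower envelopes on $f$ and using the duality relations $(\norm{\cdot}_\Wd)^*=\norm{\cdot}_\W$, $(\norm{\cdot}_\X)^*=\norm{\cdot}_\Xd$, together with the identity $p(q-1)=q$, we get
$$\tfrac{1}{q}\norm{\w}_\Xd^q\le\Psi(\w)\le\tfrac{C^q}{q}\norm{\w}_\W^q,$$
with $\Psi(0)=0$ and $\Psi\ge 0$ automatic. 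Finally, dividing the smoothness inequality by $pC^p$ yields midpoint $p$-smoothness of $f$ w.r.t.\ $\norm{\cdot}_\X$ with constant $1/p$, which by the standard Fenchel-duality correspondence is equivalent to $q$-uniform convexity of $\Psi$ w.r.t.\ $\norm{\cdot}_\Xd$ in the normalization $\tfrac{\alpha(1-\alpha)}{q}\norm{\cdot}_\Xd^q$ used in the definition of uniform convexity in this paper.

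The main obstacle is the second paragraph: one must carefully check that the concatenated tree of difference sequences is a valid Walsh--Paley sequence adapted to $(\epsilon_n)$, and that the averaging of $\E{\norm{\cdot}_\Wd^p}$ and the additivity of the $\X$-costs combine to give \emph{exactly} the factor $C^p$ in front of $\norm{y}_\X^p$ (with no inflation). A secondary subtlety is the duality between midpoint smoothness and the $\alpha$-parameterized uniform convexity: this must be invoked with the correct scaling so that the normalization by $1/(pC^p)$ delivers precisely the constants $1/q$ and $C^q/q$ in the envelope and $1/q$ in the uniform-convexity inequality.
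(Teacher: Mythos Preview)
Your proposal is correct and follows essentially the same route as the paper: define the martingale-type functional on $\Bd$ (your $N^p/C^p$ is exactly the paper's $\Psi^*$), establish the midpoint $p$-smoothness inequality via the same tree-concatenation argument (flip a sign $\epsilon_0$ to choose between the two near-optimal branches), and pass to the Fenchel conjugate to obtain the $q$-uniformly convex $\Psi$ with the sandwich $\tfrac{1}{q}\norm{\cdot}_{\Xd}^q\le\Psi\le\tfrac{C^q}{q}\norm{\cdot}_{\W}^q$. The only differences are cosmetic normalizations (your extra $1/p$ factor and the $x\pm y$ versus $\tfrac{\x_0\pm\y_0}{2}$ parameterization of the smoothness inequality).
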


The proof of Lemma \ref{lem:construct22} goes further and gives a specific uniformly
convex function $\Psi$ satisfying the desired requirement (i.e.~establishing $D_p \leq C_p$) under the
assumptions of the previous lemma:
{\small
\begin{align}\label{eq:construction2}
\Psi^*_q(\x) &:= \sup\left\{\frac{1}{C^p} \sup_{n} \E{\norm{\x + \sum_{i=1}^n \epsilon_i \x_i(\epsilon)}^p_{\Wd}} - \sum_{i\ge 1} \E{\norm{\x_i(\epsilon)}_{\X}^p} \right\} ~~ , ~~
\Psi_q  :=  (\Psi_q^*)^* \ .
\end{align}}where the supremum above is over sequences $(\x_{n})_{n
\in \mathbb{N}}$ and $p = \frac{q}{q-1}$.

%$$
%\sup_{n} \E{\Psi^*\left(\x_0 + \sum_{i=1}^n \epsilon_i \x_i(\epsilon)\right)}\le \frac{1}{p} \left(\norm{\x_0}_{\X}^p + \sum_{n \ge 1} \E{\norm{\x_n(\epsilon)}_{\X}^p} \right)
%$$
%
%\begin{align*}
%\sup_{\w \in \W} \inner{\w,\x_0 + \sum_{i=1}^n \epsilon_i \x_i(\epsilon)} & \le \eta \left( \sup_{\w \in \W} \Psi(\w) + \Psi^*\left(\frac{\x_0}{\eta} + \frac{1}{\eta}\sum_{i=1}^n \epsilon_i \x_i(\epsilon)\right) \right)\\
%& \le \eta D_p^{q} + \eta \Psi^*\left(\frac{\x_0}{\eta} + \frac{1}{\eta}\sum_{i=1}^n \epsilon_i \x_i(\epsilon)\right)
%\end{align*}
%
%$$
%\E{\norm{\x_0 + \sum_{i=1}^n \epsilon_i \x_i(\epsilon)}_{\W^*} } \le \eta D_p^{q} + \frac{1}{p \eta^{p-1}} \left(\norm{\x_0}_{\X}^p + \sum_{n \ge 1} \E{\norm{\x_n(\epsilon)}_{\X}^p} \right)
%$$
%

\section{Optimality of Mirror Descent}\label{sec:opt}
In the Section \ref{sec:MD}, we saw that if we can find an appropriate
uniformly convex function to use in the mirror descent algorithm, we
can guarantee diminishing regret. However the pending question there
was when can we find such a function and what is the rate we can
gaurantee. In Section \ref{sec:martingale} we introduced the extended
notion of Martingale type of a pair $(\Wd,\X)$ and
how it related to the value of the game. Then, in Section
\ref{sec:typeconvex}, we saw how the concept of M-type related to
existence of certain uniformly convex functions.  We can now combine
these results to show that the mirror descent algorithm is a universal
online learning algorithm for convex learning problems.  Specifically
we show that whenever a problem is online learnable, the mirror
descent algorithm can guarantee near optimal rates:

\begin{theorem}
  If for some constant $V > 0$ and some $q \in [2,\infty)$, $
  \Val_n(\W,\X) \le V n^{-\frac{1}{q}}$ for all $n$, then for any $n >
  e^{q-1}$, there exists regularizer function $\Psi$ and step-size
  $\eta$, such that the regret of the mirror descent algorithm using
  $\Psi$ against any $f_1,\ldots,f_n$ chosen by the adversary is
  bounded as:
\begin{align}\label{eq:mdval}
\Reg_n(\Algomd,f_{1:n}) \le\, 6002\, V\, \log^2(n)\ n^{- \frac{1}{q}}
\end{align}
\end{theorem}
\begin{proof}
Combining Mirror descent guarantee in Lemma \ref{lem:md}, Lemma \ref{lem:construct22} and the lower bound in Lemma \ref{lem:mn} with $p = \frac{q}{q-1} - \frac{1}{\log(n)}$ we get the above statement.
\end{proof}

The above Theorem tells us that, with appropriate $\Psi$ and learning
rate $\eta$, mirror descent will obtain regret at most a factor of
$6002 \log(n)$ from the best possible worst-case upper bound.  We would
like to point out that the constant $V$ in the value of the game
appears linearly and there is no other problem or space related hidden
constants in the bound.
%The following corollary follows immediately from the above theorem of definition of $\suptype$ in Equation \eqref{eq:suptype}
%\begin{corollary}
%For any $n > e^{1/(p^{\star}-1)}$, there exists regularizer function $\Psi$ and step-size $\eta$, such that the regret of the  mirror descent algorithm against any $f_1,\ldots,f_n \in \F$ is bounded as:
%$$
%\Reg_n(\Algomd,f_{1:n}) \le\, O\left(\log^2(n)\, n^{-\left(1 - \frac{1}{p^\star}\right)} \right) 
%$$
%\end{corollary}

%\begin{corollary}
%For any $p' <p$ : 
%$
%\MD_{p'} \le D_{p'} \le C_{p'} \le \frac{1104\ V_p}{(p - p')^2}
%$
%\end{corollary}

%\begin{figure}[h]
%\begin{center}
%\includegraphics[scale=0.3]{blkfig.pdf}
%\end{center}
%\caption{Relationship between the various constants}\label{fig:main}
%\end{figure}
The following figure summarizes the relationship between the various constants. The arrow mark from $C_{p'}$ to $C_p$ indicates that for any  $n$, all the quantities are within $\log^2 n$ factor of each other.
\begin{figure}[h]
\begin{center}
\begin{tikzpicture}[node distance=1.5cm, auto,>=latex',
cond/.style={draw, thin, rounded corners, inner sep=1ex, text centered},
cond1/.style={}]
\node[text width=1.6cm, style=cond] (lower) {\small $p' < p,\ C_{p'}$};
\node[text width=0.5cm, style=cond1,right of=lower] (le1) {\huge $ \le$};
\node[text width=1.6cm, style=cond, right of=le1] (value)
{\small $V_p$};
\node[text width=0.5cm, style=cond1,right of=value] (le2) {\huge $\ \le$};\node[text width =1.6cm, style=cond, right of=le2] (MD) {\small
$\MD_p$};
\node[text width=0.5cm, style=cond1,right of=MD] (le3) {\huge $\ \le$};
\node[text width=1.6cm, style=cond, right of=le3] (D) {\small $D_p$};
\node[text width=0.5cm, style=cond1,right of=D] (le4) {\huge $\ \le$};
\node[text width=1.6cm, style=cond, right of=le4] (C) {\small $C_p$};
\node[text width=1.6cm, style=cond1] at (1.8,-0.55) (le5) {\tiny Lemma \ref{lem:mn} };
\node[text width=3cm, style=cond1] at (1.8,-0.78) (le5) {\tiny (extending Pisier's result \cite{Pisier75})  };
\node[text width=1.6cm, style=cond1] at (4.6,-0.55) (le6) {\tiny Definition of $V_p$ };
\node[text width=3cm, style=cond1] at (7.7,-0.78) (le6) {\tiny (Generalized MD guarantee)  };
\node[text width=3cm, style=cond1] at (8.5,-0.55) (le6) {\tiny Lemma \ref{lem:md} };
\node[text width=3cm, style=cond1] at (10.55,-0.55) (le6) {\tiny Construction of $\Psi$, Lemma \ref{lem:construct2} };
\node[text width=3cm, style=cond1] at (10.7,-0.78) (le5) {\tiny (extending Pisier's result \cite{Pisier75})  };
%\node[text width=1.6cm, style=cond1] at (1.8,-0.5) (le5) {\tiny Lemma \ref{lem:mn} };
%\node[text width=1.6cm, style=cond1] at (1.8,-0.5) (le5) {\tiny Lemma \ref{lem:mn} };
%\path (lower) edge[<-, double distance=1pt] (value)
%      (value) edge[<-, double distance=1pt] (MD)
%      (MD) edge[<-, double distance=1pt] (D)
%      (D) edge[<-, double distance=1pt] (C);
\path[<-, draw, double distance=1pt,sloped] (C) -- +(0,0.75) -| (lower);
\end{tikzpicture}
\end{center}
\caption{Relationship between the various constants}\label{fig:main}
\end{figure}
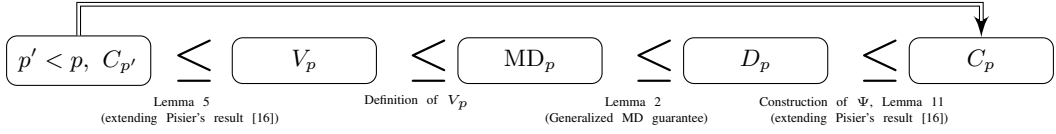

We now provide some general guidelines that will help us in picking out appropriate function $\Psi$ for mirror descent. First we note that though the function $\Psi_q$ in the construction \eqref{eq:construction2} need not be such that $(q \Psi_q(\w))^{1/q}$ is a norm, with a simple modification as noted in \cite{Pisier11} we can make it a norm. This basically tells us that the pair $(\W,\X)$ is online learnable, if and only if we can sandwich a $q$-uniformly convex norm in-between $\Xd$ and a scaled version of $\W$ (for some $q < \infty$). Also note that by definition of uniform convexity, if any function $\Psi$ is $q$-uniformly convex w.r.t. some norm $\norm{\cdot}$ and we have that $\norm{\cdot} \ge c \norm{\cdot}_\X$, then $\tfrac{\Psi(\cdot)}{c^q}$ is $q$-uniformly convex w.r.t. norm $\norm{\cdot}_\X$. These two observations together suggest that given pair $(\W,\X)$ what we need to do is find a norm $\norm{\cdot}$ in between $\norm{\cdot}_\Xd$ and $C \norm{\cdot}_\W$ ($C < \infty$, smaller the $C$ better the bound ) such that $\norm{\cdot}^q$ is $q$-uniformly convex w.r.t $\norm{\cdot}$.

\section{Examples}\label{sec:examples}

We demonstrate our results on several online learning problems,
specified by $\W$ and $\X$.

\paragraph{$\ell_{p}$ non-dual pairs}
It is usual in the literature to consider the case when $\W$ is the unit ball of the $\ell_p$ norm in some finite dimension $d$ while $\X$ is taken to be the unit ball of the dual norm $\ell_q$ where $p,q$ are H\"older conjugate exponents. Using the machinery developed in this paper, it becomes
effortless to consider the non-dual case when $\W$ is the unit ball $B_{p_1}$ of some $\ell_{p_1}$ norm while $\X$ is the unit ball $B_{p_2}$ for {\em arbitrary} $p_1,p_2$ in $[1,\infty]$. We shall use $q_1$ and $q_2$ to represent Holder conjugates of $p_1$ and $p_2$.  Before we proceed we first note that for any $r \in (1,2]$, $\psi_r(\w) := \tfrac{1}{2(r-1)} \|\w\|_r^2$ is $2$-uniformly w.r.t. norm $\norm{\cdot}_r$ (see for instance \cite{Shalev07}).  On the other hand by Clarkson's inequality, we have that for $r \in (2,\infty)$, $\psi_r(\w) := \frac{2^r}{r} \|\w\|_r^r$ is $r$-uniformly convex w.r.t. $\norm{\cdot}_r$.  Putting it together we see that for any $r \in (1,\infty)$, the function $\psi_r$ defined above, is $Q$-uniformly convex w.r.t $\norm{\cdot}_r$ for $Q = \max\{r,2\}$.
The basic technique idea is to be to select $\psi_r$ based on the guidelines in the end of the previous section. 
%To this end, if we define 
%$$
%K_r = \left\{\begin{array}{cl}
%\tfrac{1}{\sqrt{2(r-1)}}  & \textrm{if }r \le 2\\
%2  & \textrm{otherwise}
%\end{array}\right. ~~~~~~~~~~\textrm{and}~~~~~~~~~~\tilde{\psi}_r := d^{Q \max\{\frac{1}{q_2} - \frac{1}{r},0\}} \psi_r
%$$ 
%then $\left(\psi_r(\w) \right)^{1/Q} \le  K_r \norm{\w}_r$. Now recall that
%$\W = B_{p_1}(1)$ and $\Xd = B_{q_2}(1)$ where $q_2 = \tfrac{p_2}{p_2 - 1}$. In general, we have that for any $s,t$ and any $\w \in \reals^d$, $\|\w\|_s \le d^{\max\{\frac{1}{s} - \frac{1}{t} , 0\}} \|\w\|_t$. Owing to the discussion in the end of previous section, since $\psi_r$ is $Q$-uniformly convex w.r.t. $\norm{\cdot}_r$ we have that 
 %then we will have that $\tilde{\psi}_r$ is $Q$-uniformly convex w.r.t $\norm{\cdot}_{q_2}$ and
%$
%\left(\tilde{\psi}_r(\w) \right)^{1/Q} \le K_r d^{\max\{\frac{1}{q_2} - \frac{1}{r},0\} + \max\{\frac{1}{r} - \frac{1}{p_1} ,0\}}\norm{\w}_{p_1}
%$
Finally we show that using $\tilde{\psi}_r := d^{Q \max\{\frac{1}{q_2} - \frac{1}{r},0\}} \psi_r$ in Mirror descent Lemma \ref{lem:md} yields the bound that for any $f_1,\ldots,f_n \in \F$:
$$
\Reg_n(\Algomd,f_{1:n}) \le \frac{2 \max\{2 , \frac{1}{\sqrt{2(r-1)}}\} d^{\max\{\frac{1}{q_2} - \frac{1}{r},0\} + \max\{\frac{1}{r} - \frac{1}{p_1} ,0\}}}{n^{1/\max\{r,2\}}}
$$

%Note that using the above result, dimension free bound is possible only when $\frac{1}{p_1} + \frac{1}{p_2} > 1$. 
%Our basic
%tool is the fact that the function $\Psi(\w) = \tfrac{1}{2(r-1)} \|\w\|_r^2$ is $2$-uniformly
%convex (i.e. strongly convex) w.r.t. $\|\cdot\|_r$ for $r \in (1,2]$. Now, since
%$$ \|\w\|_r \ge d^{-\left(\frac{1}{q_2} - \frac{1}{r} \right)_+} \cdot \|\w\|_{q_2} $$
%for any $q_2 \in [1,\infty]$, $\Psi(\w) = \tfrac{d^{-1(1/q_2 - 1/r)_+}}{2(r-1)} \|\w\|_r^2$ is $2$-uniformly
%convex w.r.t. $q_2$. Now Theorem~\ref{TODO} say that the regret for Mirror Descent in the game with $\W = B_{p_1}(1)$ and
%$\X = B_{p_2}(1)$ is bounded as:
%\[
%	\sqrt{\frac{2}{r-1}} d^{\left(\frac{1}{q_2} - \frac{1}{r}\right)_+} \cdot d^{\left( \frac{1}{r} - \frac{1}{p_1} \right)_+} \cdot \sqrt{n}\ .
%\]
%It is a simple matter to optimize the exponent of $d$ over the choice
%of $r \in (1,2]$ for given $p_1,p_2$. 
The following table summarizes the scenarios where a value of $r=2$,
i.e.~a rate of $D_2/\sqrt{n}$, is possible, and lists the
corresponding values of $D_2$ (up to numeric constant of at most $16$): 
{\footnotesize
\begin{center}
\begin{tabular}{||l||l||l||}
\hline 
$p_1$ Range & $q_2 = \frac{p_2}{p_2 - 1}$ Range &$~~~~~~~~~~~~D_2$\\
\hline
$1 \le p_1 \le 2$ & $ q_2 > 2$ &  $1$ \\ 
$1 \le p_1 \le 2$ & $ p_1 \le q_2 \le 2$ & $\sqrt{p_2 - 1}$\\
$1 \le p_1 \le 2$ & $ 1 \le q_2 < p_1$ & $ d^{1/q_2 - 1/p_1} \sqrt{p_2 - 1}$ \\
$p_1 > 2$ & $q_2 > 2$ &  $d^{( 1/2 - 1/p_1 )} $ \\
 $p_1 > 2$ & $ 1 \le q_2 \le 2$  & $d^{( 1/q_2 - 1/p_1 )} $ \\
 $1 \le p_1 \le 2$ & $q_2 = \infty$ & $\sqrt{\log(d)}$\\
 \hline
\end{tabular}
\end{center}
} Note that the first two rows are dimension free, and so apply also
in infinite-dimensional settings, whereas in the other scenarios,
$D_2$ is finite only when the dimension is finite.  An interesting
phenomena occurs when $d$ is $\infty$, $p_1 > 2$ and $q_2 \ge p_1$. In
this case $D_2 = \infty$ and so one cant expect a rate of
$O(\frac{1}{\sqrt{n}})$. However we have $D_{p_2} < 16$ and so can
still get a rate of $n^{-\frac{1}{q_2}}$.  

Ball et al \cite{BallCaLi94} tightly calculate the constants of strong
convexity of squared $\ell_p$ norms, establishing the tightness of
$D_2$ when $p_1=p_2$.  By extending their constructions it is also
possible to show tightness (up to a factor of 16) for all other values
in the table.  Also, Agarwal et al \cite{AgaBarRavWai11} recently
showed lower bounds on the sample complexity of stochastic
optimization when $p_1 = \infty$ and $p_2$ is arbitrary---their lower
bounds match the last two rows in the table.

\paragraph{Non-dual Schatten norm pairs in finite dimensions}
Exactly the same analysis as above can be carried out for Schatten
$p$-norms, i.e.~when $\W = B_{S(p_1)}$, $\X = B_{S(p_2)}$ are the unit
balls of Schatten $p$-norm (the $p$-norm of the singular values) for matrix of dimensions $d_1 \times d_2$.  We get the same results as in the table
above (as upper bounds on $D_2$), with $d =\min\{d_1,d_2\}$. These results again follow using similar arguments as $\ell_p$ case and tight constants for strong convexity parameters of the Schatten norm from \cite{BallCaLi94}.

\paragraph{Non-dual group norm pairs in finite dimensions}
In applications such as multitask learning, groups norms such as
$\|\w\|_{q,1}$ are often used on matrices $\w \in \reals^{k \times d}$
where $(q,1)$ norm means taking the $\ell_1$-norm of the
$\ell_q$-norms of the columns of $\w$. Popular choices include $q =
2,\infty$. Here, it may be quite unnatural to use the dual norm
$(p,\infty)$ to define the space $\X$ where the data lives. For
instance, we might want to consider $\W = B_{(q,1)}$ and $\X =
B_{(\infty,\infty)} = B_{\infty}$.  In such a case we can calculate
that $D_2(\W,\X)=\Theta(k^{1 - \frac{1}{q}}\sqrt{\log(d)}) $ using $\Psi(\w) = \frac{1}{q + r-2}\norm{\w}^2_{q,r}$ where $r = \frac{\log \ d}{\log \ d - 1}$.
\ignore{
More generally we can consider the
problems where $\W = B_{(p_1,r_1)}(1)$ and $\X = B_{(p_2,r_2)}(1)$.
Their duals are $\W = B_{(q_1,s_1)}(1)$ and $\X = B_{(q_2,s_2)}(1)$
where $q_1,q_2$ are Holder conjugates of $p_1$ and $p_2$ and $s_1,s_2$
are Holder conjugates of $r_1$, $r_2$ respectively. We first note that
for any $W \in \reals^{k \times d}$,
$$
\norm{W}_{p,r} \le d^{\max\{\frac{1}{p} - \frac{1}{p'},0\}} k^{\max\{\frac{1}{r} - \frac{1}{r'},0\}}
$$
Further in \cite{KakShaTew10} for instance it is shown that for $q,s \in (1,2]$,  $\Phi^{q,s}(\cdot) = \frac{1}{2(q+s-2)}\norm{\cdot}_{q,s}^2$ is $2$-uniformly convex w.r.t. $\norm{\cdot}_{q,s}$. Hence picking $q$ and $s$ appropriately and using the above inequality to relate to $\X$ and $\W$ norm one can get bounds for mirror descent algorithm a in the $\ell_p$ case.}

\paragraph{Max Norm}
Max-norm has been proposed as a convex matrix regularizer for
application such as matrix completion \cite{SreRenJaa05}.  In the
online version of the matrix completion problem at each time step one
element of the matrix is revealed, corresponding to $\X$ being the set
of all matrices with a single element being $1$ and the rest $0$.
Since we need $\X$ to be convex we can take the absolute convex hull
of this set and use $\X$ to be the unit element-wise $\ell_1$ ball.
Its dual is $\norm{W}_{\Xd} = \max_{i,j} |W_{i,j}|$.  On the other hand
given a matrix $W$, its max-norm is given by $\norm{W}_{\mathrm{max}}
= \min_{U,V : W = UV^\top} \left(\max_{i} \norm{U_i}_2\right) \left(
  \max_j \norm{V_j}_2 \right)$. The set $\W$ is the unit ball under
the max norm.  As noted in \cite{SreShr05} the max-norm ball is
equivalent, up to a factor two, to the convex hull of all rank one
sign matrices.  Let us now make a more general observation. 

\begin{proposition}
Let $\W = \mathrm{abscvx}(\{\w_1,\ldots,\w_K\})$. The Minkowski norm for this $\W$ is given by 
$$ 
\norm{\w}_\W := \inf_{\alpha_1 ,\ldots, \alpha_K : \w = \sum_{i=1}^K \alpha_i  \w_i}\sum_{i=1}^K |\alpha_i| 
$$  
In this case, for any $q \in (1,2]$, if we define the norm :
$$ 
\norm{\w}_{\W,q} = \inf_{\alpha_1 ,\ldots, \alpha_K : \w  = \sum_{i=1}^K \alpha_i \w_i}\left( \sum_{i=1}^K |\alpha_i|^q
\right)^{1/q} 
$$ 
then the function  $\Psi(\w) = \frac{1}{2 (q-1)}\norm{\w}_{\W,q}^2$ is $2$-uniformly convex w.r.t. $\norm{\cdot}_{\W,q}$. Further if we use $q =
\frac{\log K}{ \log K - 1}$, then $\sup_{\w \in \W} \sqrt{\Psi(\w)} =
O(\sqrt{\log K})$.
\end{proposition}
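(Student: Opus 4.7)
The plan is to reduce everything to the classical $2$-uniform convexity of the squared $\ell_q$ norm on $\R^K$. The formula for the Minkowski functional is routine: $\w \in \mu \W$ iff $\w = \mu \sum_i \beta_i \w_i$ for some $(\beta_i)$ with $\sum_i |\beta_i| \le 1$, and substituting $\alpha_i = \mu \beta_i$ shows that $\inf\{\mu > 0 : \w \in \mu \W\} = \inf\bigl\{\sum_i |\alpha_i| : \w = \sum_i \alpha_i \w_i\bigr\}$, which is exactly the stated formula for $\norm{\w}_{\W}$. The $O(\sqrt{\log K})$ size bound will then follow by a short numerical calculation once the $2$-uniform convexity is in hand.

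The heart of the argument is the uniform convexity claim, and I would prove it via a lifting. Introduce the linear map $T : \R^K \to \B$, $T(\alpha) = \sum_{i=1}^K \alpha_i \w_i$, so that $\norm{\w}_{\W,q} = \inf\{\norm{\alpha}_q : T\alpha = \w\}$ is literally a quotient norm. Fix $\w, \w' \in \B$ and $\lambda \in [0,1]$; for each $\varepsilon > 0$ choose liftings $\alpha, \beta \in \R^K$ with $T\alpha = \w$, $T\beta = \w'$, $\norm{\alpha}_q \le \norm{\w}_{\W,q} + \varepsilon$, and $\norm{\beta}_q \le \norm{\w'}_{\W,q} + \varepsilon$. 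By linearity of $T$, $\lambda \alpha + (1-\lambda)\beta$ is a lifting of $\lambda \w + (1-\lambda) \w'$ and $\alpha - \beta$ is a lifting of $\w - \w'$, so by definition of the quotient norm
$$\norm{\lambda \w + (1-\lambda) \w'}_{\W,q} \le \norm{\lambda \alpha + (1-\lambda) \beta}_q, \qquad \norm{\w - \w'}_{\W,q} \le \norm{\alpha - \beta}_q.$$
Now I would invoke the standard fact (already cited in the $\ell_p$-examples paragraph) that for $q \in (1,2]$ the function $\frac{1}{2(q-1)}\norm{\cdot}_q^2$ is $2$-uniformly convex w.r.t. $\norm{\cdot}_q$ on $\R^K$, giving
$$\frac{\norm{\lambda \alpha + (1-\lambda) \beta}_q^2}{2(q-1)} \le \frac{\lambda \norm{\alpha}_q^2 + (1-\lambda)\norm{\beta}_q^2}{2(q-1)} - \frac{\lambda(1-\lambda)}{2}\norm{\alpha - \beta}_q^2.$$
Combining the two displays --- using the first inequality of the first display to upper-bound $\Psi(\lambda \w + (1-\lambda) \w')$ by the left-hand side of the convexity inequality, and the second inequality of the first display to replace $-\norm{\alpha - \beta}_q^2$ on its right by the smaller $-\norm{\w - \w'}_{\W,q}^2$ --- and then letting $\varepsilon \to 0$ yields the desired
$$\Psi(\lambda \w + (1-\lambda) \w') \le \lambda \Psi(\w) + (1-\lambda) \Psi(\w') - \frac{\lambda(1-\lambda)}{2} \norm{\w - \w'}_{\W,q}^2.$$

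For the size bound I would plug in $q = \frac{\log K}{\log K - 1}$, so that $\frac{1}{q-1} = \log K - 1$. By the first part, any $\w \in \W$ has a representation with $\sum_i |\alpha_i| \le 1$, and since $\norm{\cdot}_q \le \norm{\cdot}_1$ on $\R^K$ whenever $q \ge 1$, the same $\alpha$ satisfies $\norm{\alpha}_q \le 1$, whence $\norm{\w}_{\W,q} \le 1$. Therefore $\sup_{\w \in \W} \Psi(\w) \le \frac{\log K - 1}{2}$ and $\sup_{\w \in \W} \sqrt{\Psi(\w)} = O(\sqrt{\log K})$. The main subtlety, and the only place where one can go wrong, is getting the direction of the inequality right in the translation step: the quadratic term $-\norm{\alpha - \beta}_q^2$ on $\R^K$ is \emph{more negative} than the target $-\norm{\w - \w'}_{\W,q}^2$ on $\B$ precisely because $\norm{\cdot}_{\W,q}$ is an infimum over liftings --- the analogous inequality in the reverse direction would fail since $T$ is in general highly non-injective.
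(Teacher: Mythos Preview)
Your proof is correct and matches the paper's intent. The paper does not actually spell out a proof here; it only remarks that the argument is ``similar to proof of strong convexity of $\ell_q$ norms.'' Your quotient-norm/lifting reduction to the known $2$-uniform convexity of $\tfrac{1}{2(q-1)}\norm{\cdot}_q^2$ on $\R^K$ is precisely the clean way to make that remark rigorous, and your handling of the sign in the $-\norm{\alpha-\beta}_q^2$ term is exactly right.
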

    
Proof of the above proposition is similar to proof of strong convexity of $\ell_q$ norms.  For the max norm case as noted before the norm is equivalent to the norm got by the taking the absolute convex hull of 
the set of all rank one sign matrices. Cardinality of this set is of course $2^{N+M}$. Hence using the above proposition and noting that $\Xd$ is the unit ball of $|\cdot|_{\infty}$ we see that $\Psi$ is obviously $2$-uniformly convex w.r.t. $\norm{\cdot}_{\Xd}$ and so we get a regret bound $O\left(\sqrt{\frac{M+N}{n}} \right)$.  This matches the stochastic
(PAC) learning guarantee \cite{SreShr05}, and is the first guarantee we are
aware of for the max norm matrix completion problem in the online setting.

\paragraph{Interpolation Norms }
Another interesting setting is when the set $\W$ is got by interpolating between unit balls of two other norms $\norm{\cdot}_{\W_1}$ and $\norm{\cdot}_{\W_2}$. Specifically one can consider $\W$ to be the unit ball of two such interpolated norms, the first type of interpolation norm is given by,
\begin{align}\label{eq:norminter1}
\norm{\w}_{\W} = \norm{\w}_{\W_1} + \norm{\w}_{\W_2}
\end{align}
The second type of interpolation norm one can consider is given by 
\begin{align}\label{eq:norminter2}
\norm{\w}_{\W} = \inf_{\w_1 + \w_2 = \w}\left(\norm{\w_1}_{\W_1} + \norm{\w_2}_{\W_2}\right)
\end{align}
In learning problems such interpolation norms are often used to induce certain structures or properties into the regularization. For instance one might want sparsity along with grouping effect in the linear predictors for which elastic-net type regularization introduced by Zou and Hastie \cite{ZouHastie05} (this is captured by interpolation of the first type between $\ell_1$ and $\ell_2$ norms). Another example is in matrix completion problems when we would like the predictor matrix to be decomposable into sum of sparse and low rank matrices as done by Chanrdasekaran et. al \cite{ChaSanParWil09} (here one can use the interpolation norm of second type to interpolate between trace norm and element wise $\ell_1$ norm). Another example where interpolation norms of type two are useful are in 
multi-task learning problems (with linear predictors) as done by Jalali et. al \cite{JalRavSanRua10}.  The basic idea is that the matrix of linear predictors can is decomposed into sum of two matrices one with for instance low entry-wise $\ell_1$ norm and other with low $B_{(2,\infty)}$ group norm (group sparsity).

\noindent While in these applications the set $\W$ used is obtained through interpolation norms, it is typically not natural for the set $\X$ to be the dual ball of $\W$ but rather something more suited to the problem at hand. For instance, for the elastic net regularization case, the set $\X$ usually considered are either the vectors with bounded $\ell_\infty$ norm or bounded $\ell_2$. Similarly for the \cite{JalRavSanRua10} case $\X$ could be either matrices with bounded entries or some other natural assumption that suits the problem.

\noindent  It can be shown that in general for any interpolation norm of first type specified in Equation \ref{eq:norminter1}, 
\begin{align}\label{eq:bndinter1}
D_2(\W,\X) \le 2 \min\{D_2(\W_1,\X), D_2(\W_2,\X) \}
\end{align}
Similarly for the interpolation norm of type two one can in general show that,
\begin{align}\label{eq:bndinter2}
D_2(\W,\X) \le \frac{1}{2} \max\{D_2(\W_1,\X), D_2(\W_2,\X) \}
\end{align}
Using the above bounds one can get regret bounds for mirror descent algorithm with appropriate $\Psi$ and step size $\eta$ for specific examples like the ones mentioned.\\

\noindent The bounds given in Equations \eqref{eq:bndinter1} and \eqref{eq:bndinter2} are only upper bounds and it would be interesting to analyze these cases in more detail and also to analyze interpolation between several norms instead of just two.

\section{Conclusion and Discussion}\label{sec:conclusion}

In this paper we showed that for a general class of convex online
learning problems, there always exists a distance generating function
$\Psi$ such that Mirror Descent using this function achieves a near-optimal regret guarantee.  This shows that a fairly simple first-order method, in
which each iteration requires a gradient computation and a prox-map
computation, is sufficient for online learning in a very general
sense.  Of course, the main challenge is deriving distance generating
functions appropriate for specific problems---although we give two
mathematical expressions for such functions, in equations \eqref{eq:construction1} and \eqref{eq:construction2}, neither is particularly tractable in general. In the end of Section \ref{sec:opt} we do give some general guidelines for choosing the right distance generating function. However obtaining a more explicit and simple procedure at least for reasonable Banach spaces is a very interesting question.

Furthermore, for the Mirror Descent procedure to be efficient, the prox-map of the distance generating function must be efficiently computable, which means that even though a Mirror Descent procedure is always theoretically possible, we might in practice choose to use a non-optimal distance generating function, or even a non-MD procedure. Furthermore, we might also find other properties of $\w$ desirable, such as sparsity, which would bias us toward alternative methods
\cite{LangfordLiZh09,DuchiShSiCh08}.  Nevertheless, in most instances that we are aware of, Mirror Descent, or slight variations of it, is truly an optimal procedure and this is formalized and rigorously establish here.

In terms of the generality of the problems we handle, we required that
the constraint set $\W$ be convex, but this seems unavoidable if we
wish to obtain efficient algorithms (at least in general).
Furthermore, we know that in terms of worst-case behavior, both in the
stochastic and in the online setting, for convex cost functions, the
value is unchained when the convex hull of a non-convex constraint set
\cite{RakSriTew10}. The requirement that the data domain $\X$ be
convex is perhaps more restrictive, since even with non-convex data
domain, the objective is still convex.  Such non-convex $\X$ are
certainly relevant in many applications, e.g. when the data is sparse,
or when $\x \in \X$ is an indicator, as in matrix completion problems
and total variation regularization.  In the total variation
regularization problem, $\W$ is the set of all functions on the
interval $[0,1]$ with total variation bounded by $1$ which is in fact
a Banach space. However set $\X$ we consider here is not the entire
dual ball and in fact is neither convex nor symmetric. It only
consists of evaluations of the functions in $\W$ at points on interval
$[0,1]$ and one can consider a supervised learning problem where the
goal is to use the set of all functions with bounded variations to
predict targets which take on values in $[-1,1]$ .  Although the
total-variation problem is not learnable, the matrix completion
problem certainly is of much interest.  In the matrix completion case,
taking the convex hull of $\X$ does not seem to change the
value, %{\bf (Is this true?)}
but we are unaware of neither a guarantee that the value of the game
is unchanged when a non-convex $\X$ is replaced by its convex hull,
nor of an example where the value does change---it would certainly be
useful to understand this issue.  We view the requirement that $\W$
and $\X$ be symmetric around the origin as less restrictive and mostly
a matter of convenience.

We also focused on a specific form of the cost class $\F$, which
beyond the almost unavoidable assumption of convexity, is taken to be
constrained through the cost sub-gradients.  This is general enough for
considering supervised learning with an arbitrary convex loss in a
worst-case setting, as the sub-gradients in this case exactly
correspond to the data points, and so restricting $\F$ through its sub
gradients corresponds to restricting the data domain.  Following
Proposition \ref{prop:val}, any optimality result for $\F_{\mathrm{Lip}}$ also
applies to $\F_{\mathrm{sup}}$, and this statement can also be easily
extended to any other reasonable loss function, including the
hinge-loss, smooth loss functions such as the logistic loss, and even
strongly-convex loss functions such as the squared loss (in this
context, note that a strongly convex scalar function for supervised
learning does {\em not} translate to a strongly convex optimization
problem in the worst case).  Going beyond a worst-case formulation of
supervised learning, one might consider online repeated games with
other constraints on $\F$, such as strong convexity, or even
constraints on $\{f_t\}$ as a sequence, such as requiring low average
error or conditions on the covariance of the data---these are beyond
the scope of the current paper.

Even for the statistical learning setting, online methods along with online to batch conversion are often preferred due to their efficiency especially in high dimensional problems. In fact for $\ell_p$ spaces in the dual case,
using lower bounds on the sample complexity for statistical learning of these problems, one can show that for large dimensional problems, mirror descent is an optimal procedure even for the statistical learning problem. We would like to consider the question of whether Mirror Descent is optimal for stochastic convex optimization, or equivalently convex statistical learning, setting \cite{juditsky2009stochastic,ShalevShSrSr09,SreTew10} in general.  Establishing such universality would have significant implications, as it would indicate that any (convex) problem that is learnable, is
learnable using a one-pass first-order online method (i.e.~a
Stochastic Approximation approach).

%In fact, under such and such
%restrictions, we can relate the such and such type, which controls
%statistical learnability, to the martingale type, which we show here
%controls online learnability, and establish the stochastic
%universality of Mirror Descent.  However there are problems.  (State
%conjecture) is still open and is our ultimate goal in this line of
%research.

%\subsubsection*{References}
\bibliographystyle{plain}
%\bibliography{mdbib}
\bibliography{bib,mdbib,addbib}

\appendix
\begin{center}
{\large \bf Appendix}\\
\end{center}

\begin{proof}[Proof of Lemma \ref{lem:md} (generalized MD guarantee)]
Note that for any $\wopt \in \W$,
\begin{align*}
& \eta \left(\sum_{t=1}^n f_t(\w_t) -  \sum_{t=1}^n f_t(\wopt) \right)  \le \sum_{t=1}^n \ip{\eta \nabla f_t(\w_t)}{\w_t - \wopt}\\
& ~~~~~~~~~~~~~~~ = \sum_{t=1}^n\left( \ip{\eta \nabla f_t(\w_t)}{\w_t - \w'_{t+1}} + \ip{\nabla f_t(\w_t)}{\w'_{t+1} - \wopt} \right)\\
& ~~~~~~~~~~~~~~~ = \sum_{t=1}^n\left( \ip{\eta \nabla f_t(\w_t)}{\w_t - \w'_{t+1}} + \ip{\nabla \Psi(\w_{t}) - \nabla \Psi(\w'_{t+1})}{\w'_{t+1} - \wopt} \right)\\
& ~~~~~~~~~~~~~~~ \le \sum_{t=1}^n\left( \norm{\eta \nabla f_t(\w_t)}_{\X} \norm{\w_t - \w'_{t+1}}_{\Xd} + \ip{\nabla \Psi(\w_{t}) - \nabla \Psi(\w'_{t+1})}{\w'_{t+1} - \wopt} \right)\\
& ~~~~~~~~~~~~~~~ \le \sum_{t=1}^n\left( \frac{\eta^p}{p}\norm{\nabla f_t(\w_t)}_{\X}^p + \frac{1}{q} \norm{\w_t - \w'_{t+1}}_{\Xd}^q +  \ip{\nabla \Psi(\w_{t}) - \nabla \Psi(\w'_{t+1})}{\w'_{t+1} - \wopt} \right)
\end{align*}
Using simple manipulation we can show that
\begin{align*}
\ip{\nabla \Psi(\w_{t}) - \nabla \Psi(\w_{t+1})}{\w_{t+1} - \wopt} = \breg{\Psi}{\wopt}{\w_t} - \breg{\Psi}{\wopt}{\w_{t+1}} - \breg{\Psi}{\w_{t+1}}{\w_t}
\end{align*}
where given any $\w,\w' \in \B$, 
$$\breg{\Psi}{\w}{\w'} := \Psi(\w) - \Psi(\w') - \ip{\nabla \Psi(\w')}{\w - \w'}$$
is the Bregman divergence between $\w$ and $\w'$ w.r.t. function $\Psi$.
Hence,
{
\begin{align*}
& \eta \left( \sum_{t=1}^n f_t(\w_t) -  \sum_{t=1}^n f_t(\wopt) \right)  \\
&  ~~~~~~~~~~~~~~~ \le \sum_{t=1}^n\left( \frac{\eta^p}{p}\norm{\nabla f_t(\w_t)}_{\X}^p + \frac{1}{q} \norm{\w_t - \w'_{t+1}}_{\Xd}^q + \ip{\nabla \Psi(\w_{t}) - \nabla \Psi(\w'_{t+1})}{\w'_{t+1} - \wopt} \right)\\
& ~~~~~~~~~~~~~~~ = \sum_{t=1}^n\left( \frac{\eta^p}{p}\norm{\nabla f_t(\w_t)}_{\X}^p + \frac{1}{q} \norm{\w_t - \w'_{t+1}}_{\Xd}^q + \breg{\Psi}{\wopt}{\w_t} - \breg{\Psi}{\wopt}{\w'_{t+1}} - \breg{\Psi}{\w'_{t+1}}{\w_t}  \right) \\
& ~~~~~~~~~~~~~~~ \le \sum_{t=1}^n\left( \frac{\eta^p}{p}\norm{\nabla f_t(\w_t)}_{\X}^p + \frac{1}{q} \norm{\w_t - \w'_{t+1}}_{\Xd}^q + \breg{\Psi}{\wopt}{\w_t} - \breg{\Psi}{\wopt}{\w_{t+1}} - \breg{\Psi}{\w'_{t+1}}{\w_t}  \right) \\
& ~~~~~~~~~~~~~~~ = \sum_{t=1}^n\left( \frac{\eta^p}{p}\norm{\nabla f_t(\w_t)}_{\X}^p + \frac{1}{q} \norm{\w_t - \w'_{t+1}}_{\Xd}^q   - \breg{\Psi}{\w'_{t+1}}{\w_t}  \right) + \breg{\Psi}{\wopt}{\w_1} - \breg{\Psi}{\wopt}{\w_{n+1}}\\
& ~~~~~~~~~~~~~~~ \le \sum_{t=1}^n\left( \frac{\eta^p}{p}\norm{\nabla f_t(\w_t)}_{\X}^p + \frac{1}{q} \norm{\w_t - \w'_{t+1}}_{\Xd}^q   - \breg{\Psi}{\w'_{t+1}}{\w_t}  \right) + \Psi(\wopt) 
\end{align*}
}
Now since $\Psi$ is $q$-uniformly convex w.r.t. $\norm{\cdot}_{\Xd}$,
for any $\w , \w' \in \Bd$, $\breg{\Psi}{\w'}{\w} \ge \frac{1}{q} \norm{\w - \w'}^q_{\Xd}$. Hence we conclude that
\begin{align*}
\sum_{t=1}^n f_t(\w_t) -  \sum_{t=1}^n f_t(\wopt) & \le\frac{\eta^{p-1}}{p}  \sum_{t=1}^n \norm{\nabla f_t(\w_t)}_{\X}^p  + \frac{\Psi(\wopt)}{\eta} \\
& \le\frac{\eta^{p-1} B n}{p}  + \frac{\sup_{\w \in \W} \Psi(\w)}{\eta} \\
& \le\frac{\eta^{p-1} B n}{p}  + \frac{\sup_{\w \in \W} \Psi(\w)}{\eta} 
\end{align*}
Plugging in the value of $\eta = \left(\frac{\sup_{\w \in \W} \Psi(\w)}{n B}\right)^{1/p}$ we get : 
\begin{align*}
\sum_{t=1}^n f_t(\w_t) -  \sum_{t=1}^n f_t(\wopt) & \le 2 \left( \sup_{\w \in \W} \Psi(\w)\right)^{1/q} (B n)^{1/p} 
\end{align*}
dividing throughout by $n$ conclude the proof.
\end{proof}

\begin{lemma}\label{lem:construct2}
Let $1 < p \le 2$ and $C > 0$ be fixed constants, the following statements are equivalent : 
\begin{enumerate}
\item For all sequence of mappings $(\x_{n})_{n \ge 1}$ where each $\x_n : \{\pm1\}^{n-1} \mapsto \Bd$ and any $\x_0 \in \Bd$:
$$
\sup_{n} \E{\norm{\x_0 + \sum_{i=1}^n \epsilon_i \x_i(\epsilon)}_{\Wd}^p} \le C^p \left(\norm{\x_0}_{\X}^p + \sum_{n \ge 1} \E{\norm{\x_n(\epsilon)}_{\X}^p} \right)
$$
\item There exist a non-negative convex function $\Psi$ on $\B$ with $\Phi(0) = 0$, that is $q$-uniformly convex w.r.t. norm $\norm{\cdot}_{\Xd}$ and for any $\w \in \B$, $ \frac{1}{q} \norm{\w}_{\Xd}^q \le \Psi(\w) \le \frac{C^q}{q} \norm{\w}_{\W}^q $.
\end{enumerate}
\end{lemma}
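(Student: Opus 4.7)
I will prove both directions of the equivalence by working on the dual side. By Fenchel duality, $q$-uniform convexity of $\Psi_q$ with respect to $\norm{\cdot}_{\Xd}$ is (up to an absolute factor) the same as $p$-smoothness of $\Psi_q^* := (\Psi_q)^*$ with respect to $\norm{\cdot}_{\X}$, and the norm sandwich on $\Psi_q$ demanded in (2) is dual to a sandwich
\[
\tfrac{1}{C^p}\norm{\x}_{\Wd}^p \le \Psi_q^*(\x) \le \norm{\x}_{\X}^p.
\]
Throughout I manipulate $\Psi_q^*$ and translate back to $\Psi_q$ at the end via conjugation, which absorbs only a universal $p^{q/p}$ factor into $C$.

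\textbf{Easy direction $(2) \Rightarrow (1)$.} Given $\Psi_q$ as in (2), its conjugate $\Psi_q^*$ satisfies the $p$-smoothness inequality $\Psi_q^*(\x + \y) + \Psi_q^*(\x - \y) \le 2\Psi_q^*(\x) + K\norm{\y}_{\X}^p$ for an absolute constant $K$. For the Walsh--Paley martingale $Z_n := \x_0 + \sum_{i=1}^n \epsilon_i \x_i(\epsilon)$, conditioning on $\epsilon_1,\ldots,\epsilon_{n-1}$ and applying smoothness at $\x = Z_{n-1}$, $\y = \x_n(\epsilon)$ gives $\E{\Psi_q^*(Z_n) \mid \epsilon_{1:n-1}} \le \Psi_q^*(Z_{n-1}) + \tfrac{K}{2}\norm{\x_n(\epsilon)}_{\X}^p$. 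Iterating and taking full expectation yields $\E{\Psi_q^*(Z_n)} \le \Psi_q^*(\x_0) + \tfrac{K}{2}\sum_i \E{\norm{\x_i(\epsilon)}_{\X}^p}$, and combining with the $\Psi_q^*$ sandwich above produces the M-type $p$ inequality of (1).

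\textbf{Hard direction $(1) \Rightarrow (2)$: construction and norm bounds.} Define $\Psi_q^*$ by the formula \eqref{eq:construction2} and set $\Psi_q := (\Psi_q^*)^*$. Then $\Psi_q^*$ is convex (a supremum of convex functions of $\x$), non-negative, and $\Psi_q^*(0) = 0$. Plugging in the identically-zero sequence inside the defining supremum gives the lower bound $\Psi_q^*(\x) \ge \tfrac{1}{C^p}\norm{\x}_{\Wd}^p$, while the M-type $p$ hypothesis of (1) applied inside the supremum yields the matching upper bound $\Psi_q^*(\x) \le \norm{\x}_{\X}^p$. Taking Fenchel conjugates translates these into the desired sandwich on $\Psi_q$.

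\textbf{Main obstacle: $p$-smoothness of $\Psi_q^*$.} The nontrivial step is to prove
\[
\tfrac{1}{2}\Psi_q^*(\x + \y) + \tfrac{1}{2}\Psi_q^*(\x - \y) \le \Psi_q^*(\x) + \norm{\y}_{\X}^p.
\]
Fix $\delta > 0$ and choose finite sequences $(\x_i^+)_{i=1}^N$ and $(\x_i^-)_{i=1}^N$ that are $\delta$-near-optimal in the suprema defining $\Psi_q^*(\x + \y)$ and $\Psi_q^*(\x - \y)$ respectively. Splice them into one sequence $(\tilde{\x}_i)_{i=1}^{N+1}$ by setting $\tilde{\x}_1 \equiv \y$ and, for $i \ge 2$, $\tilde{\x}_i(\epsilon) = \x_{i-1}^+(\epsilon_2,\ldots,\epsilon_{i-1})$ if $\epsilon_1 = +1$ and $\tilde{\x}_i(\epsilon) = \x_{i-1}^-(\epsilon_2,\ldots,\epsilon_{i-1})$ if $\epsilon_1 = -1$. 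This preserves the causality $\tilde{\x}_i = \tilde{\x}_i(\epsilon_1,\ldots,\epsilon_{i-1})$, and because $\epsilon_1$ flips the sign on $\y$, conditioning on $\epsilon_1 = \pm 1$ shows that $\x + \sum_{i=1}^{N+1}\epsilon_i\tilde{\x}_i(\epsilon)$ is distributed on the two halves exactly as $(\x \pm \y) + \sum_{j=1}^{N}\epsilon'_j\x_j^\pm(\epsilon')$. Hence
\[
\E{\norm{\x + \textstyle\sum_{i=1}^{N+1}\epsilon_i\tilde{\x}_i(\epsilon)}_{\Wd}^p} = \tfrac{1}{2}\E{\norm{(\x + \y) + \textstyle\sum_j \epsilon_j \x_j^+}_{\Wd}^p} + \tfrac{1}{2}\E{\norm{(\x - \y) + \textstyle\sum_j \epsilon_j \x_j^-}_{\Wd}^p}
\]
and $\sum_i \E{\norm{\tilde{\x}_i(\epsilon)}_{\X}^p} = \norm{\y}_{\X}^p + \tfrac{1}{2}\sum_i \E{\norm{\x_i^+}_{\X}^p} + \tfrac{1}{2}\sum_i \E{\norm{\x_i^-}_{\X}^p}$. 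Substituting both into the definition of $\Psi_q^*(\x)$ and letting $\delta \to 0$ yields the displayed inequality. By the standard Pisier-style duality between this Clarkson-type $p$-smoothness and $q$-uniform convexity of the conjugate, $\Psi_q$ is $q$-uniformly convex with respect to $\norm{\cdot}_{\Xd}$, completing the proof of (2).
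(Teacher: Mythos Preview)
Your proposal is correct and follows essentially the same route as the paper's proof: the paper also constructs $\Psi^*$ via the supremum formula \eqref{eq:construction2}, bounds it between $\tfrac{1}{C^p}\norm{\cdot}_{\Wd}^p$ and $\norm{\cdot}_{\X}^p$ using the zero sequence and the M-type hypothesis respectively, and then proves the midpoint $p$-smoothness inequality by exactly your splicing construction (the paper writes it with variables $\x_0,\y_0$ and midpoint $\tfrac{\x_0+\y_0}{2}$, which is your inequality under the substitution $\x_0=\x+\y$, $\y_0=\x-\y$), before passing to $\Psi$ by conjugation; the easy direction $(2)\Rightarrow(1)$ is dismissed in the paper with one line invoking smoothness of $\Psi^*$, just as you do.
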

\begin{proof}
For any $\x \in \Bd$ define $\Psi^* : \Bd \mapsto \mbb{R}$ as
$$
\Psi^*(\x ) := \sup\left\{\left(\frac{1}{C^p} \sup_{n} \E{\norm{\x + \sum_{i=1}^n \epsilon_i \x_i(\epsilon)}_{\Wd}^p} - \sum_{i \ge 1} \E{\norm{\x_i(\epsilon)}_\X^p}\right)\right\}
$$
where the supremum is over sequence of mappings $(\x_{n})_{n \ge 1}$ where each $\x_n : \{\pm1\}^{n-1} \mapsto \Bd$ and the sequence is such that, $\underset{n}{\sup}\ \E{\norm{\x + \sum_{i=1}^n \x_i}_{\Wd}^p} < \infty$. Since supremum of convex functions is a convex function, it is easily verified that $\Psi^*(\cdot)$ is convex. Note that by the definition of M-type in Equation \ref{eq:mtype}, we have that for any $\x_0 \in \Bd$, $\Psi^*(\x_0) \le \norm{\x_0}_\X^p$. On the other hand, note that by considering the sequence of constant mappings, $\x_i = 0$ for all $i \ge 1$, we get that for any $\x_0 \in \Bd$,
\begin{align*}
\Psi^*(\x_0) &= \sup\left\{\left(\frac{1}{C^p} \sup_{n} \E{\norm{\x_0 + \sum_{i=1}^n \epsilon_i \x_i(\epsilon)}_{\Wd}^p} - \sum_{i \ge 1} \E{\norm{\x_i(\epsilon)}_\X^p}\right)\right\} \ge \frac{1}{C^p} \norm{\x_0}_{\Wd}^p
\end{align*}
Thus we can conclude that for any $\x \in \Bd$, $\frac{1}{C^p} \norm{\x}_{\Wd}^p \le \Psi^*(\x) \le \norm{\x}_{\X}^p $. \\

\noindent For any $\x_0 , \y_0 \in \Bd$, by definition of $\Psi^*(\x_0)$ and $\Psi^*(\y_0)$, for any $\gamma > 0$, there exist sequences $(\x_n)_{n \ge 1}$ and $(\y_n)_{n \ge 1}$ s.t. :
\begin{align*}
\Psi^*(\x_0) &\le \left(\frac{1}{C^p} \sup_{n} \E{\norm{\x_0 + \sum_{i =1}^n \epsilon_i \x_{i}(\epsilon)}_{\Wd}^p} - \sum_{i \ge 1} \E{\norm{\x_i(\epsilon)}_\X^p}\right) + \gamma
\end{align*}
and 
\begin{align*}
\Psi^*(\y_0^{(j)}) &\le \left(\frac{1}{C^p} \sup_{n} \E{\norm{\y_0 + \sum_{i=1}^n \epsilon_i \y_i(\epsilon) }_{\Wd}^p} - \sum_{i \ge 1} \E{\norm{\y_i(\epsilon)}_\X^p}\right) + \gamma
\end{align*}
In fact in the above two inequalities if the supremum over $n$ were achieved at some finite $n_0$, by replacing the original sequence by one which is identical up to $n_0$ and for any $i > n_0$ using $\x_i(\epsilon) = 0$ (and similarly $\y_i(\epsilon) = 0$), we can in fact conclude that using these $\x$'s and $\y$'s instead,
\begin{align}\label{eq:gx}
\Psi^*(\x_0) &\le \left(\frac{1}{C^p} \E{\norm{\x_0 + \sum_{i \ge 1} \epsilon_i \x_{i}(\epsilon)}_{\Wd}^p} - \sum_{i \ge 1} \E{\norm{\x_i(\epsilon)}_\X^p}\right) + \gamma
\end{align}
and 
\begin{align}\label{eq:gy}
\Psi^*(\y_0^{(j)}) &\le \left(\frac{1}{C^p} \E{\norm{\y_0 + \sum_{i\ge 1} \epsilon_i \y_i(\epsilon) }_{\Wd}^p} - \sum_{i \ge 1} \E{\norm{\y_i(\epsilon)}_\X^p}\right) + \gamma
\end{align}

Now consider a sequence formed by taking $\z_0 = \frac{\x_0 + \y_0}{2}$ and further let 
$$
\z_1 = \left(\frac{1 + \epsilon_0}{2}\right) \frac{\x_0 - \y_0}{2}  + \left(\frac{1 - \epsilon_0}{2}\right) \frac{\y_0 - \x_0}{2} = \epsilon_0 (\x_0 - \y_0)
$$
and for any $i \ge 2$, define 
$$
\z_i = \left(\frac{1 + \epsilon_0}{2}\right) \epsilon_{i-1} \x_{i-1}(\epsilon) + \left(\frac{1 - \epsilon_0}{2}\right) \epsilon_{i-1} \y_{i-1}(\epsilon)
$$
where $\epsilon_0 \in \{\pm 1\}$ is drawn uniformly at random. That is essentially at time $i=1$ we flip a coin and decide to go with the sequence $(\x_n)_{n \ge 0}$ with probability $1/2$ and $(\y_n)_{n \ge 0}$ with probability $1/2$. Clearly using the sequence $(\z_{n})_{n \ge 1}$, we have that,
\begin{align*}
& \Psi^*\left(\frac{\x_0 + \y_0}{2}\right)  = \sup_{(\z)_{n \ge 1}}\left\{\left(\frac{1}{C^p} \sup_{n} \E{\norm{\frac{\x_0 + \y_0}{2} + \sum_{i=1}^n  \z_i(\epsilon_0,\epsilon)}_{\Wd}^p} - \sum_{i \ge 1} \E{\norm{\z_i(\epsilon_0,\epsilon)}_\X^p}\right)^{1/p} \right\}^p \\
& ~~~~~~~~  \ge \frac{1}{C^p} \E{\norm{\z_0 + \sum_{i\ge 1} \z_i(\epsilon_0,\epsilon) }_{\Wd}^p} - \sum_{i \ge 1} \E{\norm{\z_i(\epsilon_0,\epsilon)}_\X^p}\\
& ~~~~~~~~  = \frac{1}{C^p} \frac{\E{\norm{\x_0 + \sum_{i\ge 1} \epsilon_i \x_i(\epsilon) }_{\Wd}^p} + \E{\norm{\y_0 + \sum_{i\ge 1} \epsilon_i \y_i(\epsilon) }_{\Wd}^p}}{2} - \sum_{i \ge 1} \E{\norm{\z_i(\epsilon_0,\epsilon)}_\X^p}\\
& ~~~~~~~~  = \frac{1}{C^p} \frac{ \E{\norm{\x_0 + \sum_{i\ge 1} \epsilon_i \x_i(\epsilon) }_{\Wd}^p} +  \E{\norm{\y_0 + \sum_{i\ge1} \epsilon_i \y_i(\epsilon) }_{\Wd}^p}}{2} - \sum_{i \ge 1} \E{\norm{\z_i(\epsilon_0,\epsilon)}_\X^p}\\
& ~~~~~~~~  = \frac{1}{C^p} \frac{ \E{\norm{\x_0 + \sum_{i\ge1} \epsilon_i \x_i(\epsilon) }_{\Wd}^p} +  \E{\norm{\y_0 + \sum_{i\ge1} \epsilon_i \y_i(\epsilon) }_{\Wd}^p}}{2} -\norm{\frac{\x_0 - \y_0}{2}}_{\X}^p - \sum_{i \ge 2} \E{\norm{\z_i(\epsilon_0,\epsilon)}_\X^p}\\
& ~~~~~~~~  = \frac{1}{C^p} \frac{ \E{\norm{\x_0 + \sum_{i\ge1} \epsilon_i \x_i(\epsilon) }_{\Wd}^p} + \E{\norm{\y_0 + \sum_{i\ge1} \epsilon_i \y_i(\epsilon) }_{\Wd}^p}}{2} -\norm{\frac{\x_0 - \y_0}{2}}_{\X}^p \\
& ~~~~~~~~  ~~~~~~~~~~ - \sum_{i \ge 1} \frac{\E{\norm{\x_i(\epsilon)}_\X^p} + \E{\norm{\y_i(\epsilon)}_\X^p}}{2}\\ 
&  ~~~~~~~~  =  \frac{\frac{1}{C^p} \En{\norm{\x_0 + \sum_{i\ge1} \epsilon_i \x_i(\epsilon) }_{\Wd}^p} - \sum_{i \ge 1} \En{\norm{\x_i(\epsilon)}}_\X^p + \frac{1}{C^p}  \En{\norm{\y_0 + \sum_{i\ge1} \epsilon_i \y_i(\epsilon) }_{\Wd}^p} - \sum_{i \ge 1} \En{\norm{\y_i(\epsilon)}_\X^p} }{2} \\
& ~~~~~~~~  ~~~~~~~~~~ -\norm{\frac{\x_0 - \y_0}{2}}_{\X}^p \\
& ~~~~~~~~  \ge  \frac{\Psi^*(\x_0) + \Psi^*(\y_0)}{2}  -\norm{\frac{\x_0 - \y_0}{2}}_{\X}^p - \gamma
\end{align*}
where the last step is obtained by using Equations \ref{eq:gx} and \ref{eq:gy}. Since $\gamma$ was arbitrary taking limit we conclude that for any $\x_0$ and $\y_0$, 
$$
\frac{\Psi^*(\x_0) + \Psi^*(\y_0)}{2}  \le \Psi^*\left(\frac{\x_0 + \y_0}{2}\right)  + \norm{\frac{\x_0 - \y_0}{2}}_{\X}^p
$$
Hence we have shown the existence of a convex function $\Psi^*$ that is $p$-uniformly smooth w.r.t. norm $\norm{\cdot}_\X$ such that $\frac{1}{C^p} \norm{\cdot}_{\Wd}^p \le \Psi^*(\cdot) \le \norm{\cdot}_{\X}^p$. Using convex duality we can conclude that the convex conjugate $\Psi$ of function $\Psi^*$, is $q$-uniformly convex w.r.t. norm $\|\cdot\|_{\Xd}$ and is such that $\norm{\cdot}_{\X}^q \le \Psi(\cdot) \le C^q \norm{\cdot}_{\W}^q$. 
That $2$ implies $1$ can be easily verified using the smoothness property of $\Psi^*$.

\end{proof}

\noindent The following sequence of four lemma's give us the essentials towards proving Lemma \ref{lem:mn}. They use similar techniques as in \cite{Pisier75}.

\begin{lemma}
Let $1 < r \le 2$. If there exists a constant $D > 0$ such that any $\x_0 \in \Bd$ and any sequence of mappings $(\x_{n})_{n \ge 1}$, where $\x_{n} : \{\pm 1\}^{n-1} \mapsto \Bd$ satisfy :
\begin{align*}
\forall n \in \mathbb{N}, ~~~~~  \E{\norm{\x_0 + \sum_{i=1}^n \epsilon_i \x_i(\epsilon)}_{\Wd}} \le D (n + 1)^{1/r} \sup_{0 \le i \le n} \sup_{\epsilon} \norm{\x_i(\epsilon)}_{\X}
\end{align*}
then for all $p < r$ and $\alpha_p = \frac{20 D}{r - p} $ we can conclude that any $\x_0 \in \Bd$ and any sequence of mappings $(\x_{n})_{n \ge 1}$, where $\x_{n} : \{\pm 1\}^{n-1} \mapsto \Bd$ will satisfy :
\begin{align*}
\sup_n \E{\norm{\x_0 + \sum_{i=1}^n \epsilon_i \x_i(\epsilon)}_{\Wd}} \le \alpha_p  \sup_{\epsilon} \left(\sum_{i\ge 0}\norm{\x_i(\epsilon)}_{\X}^p\right)^{1/p} 
\end{align*}  
\end{lemma}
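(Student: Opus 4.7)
My plan is to adapt Pisier's classical extrapolation trick: dyadically decompose the $\x_i$'s by magnitude, apply the hypothesis to each level using the deterministic sparsity forced by the $\ell^p$ bound, and sum the resulting level-wise contributions in a geometric series that converges precisely because $p<r$.

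By homogeneity, I may assume $\sup_\epsilon\left(\sum_{i\ge0}\norm{\x_i(\epsilon)}_\X^p\right)^{1/p}=1$, which in particular forces $\sup_{i,\epsilon}\norm{\x_i(\epsilon)}_\X\le 1$. For each $k\ge 0$, define the level-$k$ pieces
\[
\y_i^{(k)}(\epsilon) := \x_i(\epsilon)\,\ind{\norm{\x_i(\epsilon)}_\X \in (2^{-k-1},2^{-k}]},
\]
and analogously $\x_0^{(k)}$. Because $\norm{\x_i(\epsilon)}_\X$ is a function only of $\epsilon_1,\ldots,\epsilon_{i-1}$, each $\y_i^{(k)}:\{\pm1\}^{i-1}\to\Bd$ is a valid mapping and $\norm{\y_i^{(k)}(\epsilon)}_\X\le 2^{-k}$ deterministically. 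The $\ell^p$ constraint yields the sparsity bound
\[
N_k^{\max} := \sup_\epsilon \abs{\{i\ge 0: \y_i^{(k)}(\epsilon)\ne 0\}} \le 2^{p(k+1)},
\]
since any non-zero level-$k$ term has $\X$-norm at least $2^{-k-1}$.

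Next, for each $k$ I aim to show
\[
\mbb{E}\norm{\x_0^{(k)} + \sum_{i\ge1}\epsilon_i\y_i^{(k)}(\epsilon)}_\Wd \le D(N_k^{\max}+1)^{1/r}\,2^{-k}.
\]
Applying the hypothesis directly to $(\y_i^{(k)})_{i=1}^n$ only gives the useless bound $D(n+1)^{1/r}2^{-k}$, since the naive length $n$ does not see the sparsity. To exploit it, I would compress the block: let $\tau_j^{(k)}(\epsilon)$ denote the $j$-th index at which $\y_i^{(k)}(\epsilon)\ne 0$. Each $\tau_j^{(k)}$ is a predictable stopping time (the level indicator $\ind{\norm{\x_i}_\X\in(2^{-k-1},2^{-k}]}$ is $\mc{F}_{i-1}$-measurable), and the time-changed sum $\sum_{j\le N_k^{\max}}\epsilon_{\tau_j^{(k)}}\x_{\tau_j^{(k)}}$ equals $\sum_i\epsilon_i\y_i^{(k)}$. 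After conditioning on the $\epsilon_i$'s outside the selected indices, so that $\x_{\tau_j^{(k)}}$ becomes a function only of the primary Rademacher signs $\epsilon_{\tau_1^{(k)}},\ldots,\epsilon_{\tau_{j-1}^{(k)}}$, this sub-sampled process is a Walsh--Paley martingale of length $\le N_k^{\max}$ with $\X$-norms bounded by $2^{-k}$, to which the hypothesis applies. The resulting conditional bound $D(N_k^{\max}+1)^{1/r}2^{-k}$ is uniform in the conditioning and so survives unconditional integration.

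Finally, by the triangle inequality in $\norm{\cdot}_\Wd$,
\[
\mbb{E}\norm{\x_0+\sum_{i\ge1}\epsilon_i\x_i(\epsilon)}_\Wd \le \sum_{k\ge 0} D(N_k^{\max}+1)^{1/r}\,2^{-k} \le 2D\cdot 2^{p/r}\sum_{k\ge 0}2^{k(p-r)/r},
\]
using $(N+1)^{1/r}\le 2N^{1/r}$ for $N\ge 1$. The geometric series sums to $1/(1-2^{(p-r)/r})$, and the elementary bound $1-2^{-\beta}\ge \beta(\ln 2)/2$ for $\beta=(r-p)/r\in(0,1]$ gives $1/(1-2^{(p-r)/r})\le 2r/((r-p)\ln 2)$. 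With $r\le 2$ and $2^{p/r}\le 2$, the overall constant is at most $20D/(r-p)=\alpha_p$. The bound is $n$-free, so taking the supremum over $n$ is free.

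I expect the main obstacle to be the compression step: the $\epsilon$-dependence of the zero-pattern means that the sub-sampled process is a priori only a Rademacher martingale difference sequence rather than a Walsh--Paley martingale of the exact form required by the hypothesis, and care must be taken in the conditioning argument (since the ``primary'' vs.\ ``auxiliary'' partition of the $\epsilon_i$'s itself depends on $\epsilon$). Once this step is rigorously handled, everything else is routine geometric-sum accounting, with the sharp $(r-p)^{-1}$ blow-up arising directly from the sum.
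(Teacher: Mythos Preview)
Your proof is essentially the paper's: dyadic stratification of the $\x_i$ by $\norm{\cdot}_\X$-magnitude, predictable stopping times to compress each stratum to a short sequence, level-wise application of the hypothesis, and summation of the resulting geometric series converging because $p<r$. The only cosmetic difference is the choice of thresholds---the paper uses levels $(S\,2^{-(k+1)/p},\,S\,2^{-k/p}]$ (yielding $|I_k(\epsilon)|\le 2^{k+1}$ and ratio $2^{1/r-1/p}$) rather than your $(2^{-k-1},2^{-k}]$---and, like you, it writes the level-$k$ contribution as $\sum_j \x_{T_j^{(k)}(\epsilon)}$ and invokes the hypothesis with length $\sup_\epsilon|I_k(\epsilon)|$ without spelling out the conditioning argument you rightly flag as the delicate point.
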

\begin{proof}
To begin with note that in the definition of type, if the supremum over $n$ were achieved at some finite $n_0$, then by replacing the original sequence by one which is identical up to $n_0$ and then on for any $i > n_0$ using $\x_i(\epsilon) = 0$ would only tighten the inequality. Hence it suffices to only consider such sequences. Further to prove the statement we only need to consider finite such sequences (ie. sequences such that there exists some $n$ so that for any $i > n$, $\x_i = 0$) and show that the inequality holds for every such $n$ (every such sequence). 

Restricting ourselves to such finite sequences, we now use the shorthand,\\
$
S = \sup_{\epsilon} \left(\sum_{i=0}^n \norm{\x_i(\epsilon)}_{\X}^p \right)^{1/p}
$.
Now define
\begin{align*}
& I_k(\epsilon) = \left\{i \ge 0 \middle| \tfrac{S}{2^{(k+1)/p}} < \|\x_i(\epsilon)\|_\X  \le \tfrac{S}{2^{k/p}}\right\}~~,\\
& T_0^{(k)}(\epsilon) = \inf\{i \in I_k(\epsilon)\}~~\textrm{and}\\
& \forall m \in \mathbb{N},\ T_m^{(k)}(\epsilon) = \inf\{i > T_{m-1}^{(k)}(\epsilon), i \in I_k(\epsilon)\}
\end{align*}
Note that for any $\epsilon \in \{\pm1\}^{\mathbb{N}}$,
$$
S^p \ge \sum_{i \in I_k(\epsilon)} \|\x_i(\epsilon)\|_\X^p > \tfrac{S^p\ |I_k(\epsilon)|}{2^{(k+1)}} 
$$
and so we get that $\sup_{\epsilon} |I_k(\epsilon)| < 2^{k+1}$. From this we conclude that

\begin{align*}
\E{\norm{\x_0 + \sum_{i=1}^n \x_i(\epsilon)}_{\Wd}} & \le \sum_{k \ge 0}\E{\norm{\sum_{i\in I_k(\epsilon)} \x_i(\epsilon)}_{\Wd}} \\
& = \sum_{k \ge 0}\E{\norm{\sum_{i \ge 0} \x_{T_i^{(k)}(\epsilon)}}_{\Wd}} \\
& \le \sum_{k \ge 0} \left( D\ \sup_{\epsilon}\{ |I_k(\epsilon)|^{1/r}\} \sup_{\epsilon}\{\sup_{i \in I_k(\epsilon)} \norm{\x_i(\epsilon)}_{\X}\} \right) \\
& \le \sum_{k \ge 0} \left( D\ 2^{(k+1)/r} \sup_{\epsilon} \sup_{i \in I_k(\epsilon)} \norm{\x_i(\epsilon)}_{\X,\infty} \right) \\
\intertext{}
& \le \sum_{k \ge 0} \left( D\ 2^{(k+1)/r}\  2^{-k/p} S \right) \\
& = D\ 2^{1/r} \ \sum_{k \ge 0}2^{k (\frac{1}{r} - \frac{1}{p})}\ S\\
& \le  \frac{2 D}{1 - 2^{(\frac{1}{r} - \frac{1}{p})}} S\\
& \le \frac{2 D}{1 - 2^{-(r-p)/4}} S\\
& \le \frac{12 D}{r - p} S\\
& = \alpha_p \sup_{\epsilon}\left(\sum_{i=0}^n \norm{\x_i(\epsilon)}_{\X}^p \right)^{1/p}
\end{align*}
\end{proof}

\begin{lemma}
Let $1 < r \le 2$. If there exists a constant $D > 0$ such that any $\x_0 \in \Bd$ and any sequence of mappings $(\x_{n})_{n \ge 1}$, where $\x_{n} : \{\pm 1\}^{n-1} \mapsto \Bd$ satisfy :
\begin{align*}
\forall n \in \mathbb{N}, ~~~~~  \E{\norm{\x_0 + \sum_{i=1}^n \epsilon_i \x_i(\epsilon)}_{\Wd}} \le D (n + 1)^{1/r} \sup_{0 \le i \le n} \sup_{\epsilon} \norm{\x_i(\epsilon)}_{\X}
\end{align*}
then for any $p < r$, any $\x_0 \in \Bd$ and any mapping $(\x_{n})_{n \ge 1}$, where $\x_{n} : \{\pm 1\}^{n-1} \mapsto \Bd$:
\begin{align*}
\mathbb{P}\left(\sup_{n} \norm{\x_0 + \sum_{i=1}^n \epsilon_i \x_i(\epsilon)}_{\Wd} > c \right)  \le  2 \left( \frac{\alpha_p}{c} \right)^{p/(p+1)} \left(\norm{\x_0}_{\X}^p + \sum_{i \ge 1} \E{\norm{\x_i(\epsilon)}_{\X}^p} \right)^{1/(p+1)}
\end{align*}
%\begin{align*}
%\left(\sup_{c > 0} c^{p/2}\ \mathbb{P}\left(\sup_{n} \norm{\x_0 + \sum_{i=1}^n \epsilon_i \x_i(\epsilon)}_{\Wd} > c \right) \right)^{2/p} \le (1 + \alpha_p)^{2/p} \left( \x_0 + \sum_{i \ge 1} \E{\norm{\x_i(\epsilon)}_{\X}^p} \right)^{1/p}
%\end{align*}
\end{lemma}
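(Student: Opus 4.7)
The plan is to deduce the maximal tail bound from the in-expectation bound of the preceding lemma by stopping the martingale at the first time its running $\ell_p$-sum of increments exceeds a threshold, then invoking Doob's weak-type maximal inequality. Throughout, write $M_n = \x_0 + \sum_{i=1}^n \epsilon_i \x_i(\epsilon)$ and $A = \norm{\x_0}_\X^p + \sum_{i \geq 1} \mathbb{E}\norm{\x_i(\epsilon)}_\X^p$.

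Fix a parameter $\lambda > 0$ to be optimized later and define the stopping time
$$\sigma := \inf\Bigl\{n \geq 1 : \norm{\x_0}_\X^p + \sum_{i=1}^n \norm{\x_i(\epsilon)}_\X^p > \lambda^p\Bigr\}.$$
Since $\x_i$ is determined by $\epsilon_1, \ldots, \epsilon_{i-1}$, the event $\{i < \sigma\}$ is $\mathcal{F}_{i-1}$-measurable, so the truncated increments $\y_i(\epsilon) := \x_i(\epsilon) \mathbf{1}_{i < \sigma}$ form a valid sequence of mappings $\{\pm 1\}^{i-1} \mapsto \Bd$ to which the preceding lemma applies. Let $\tilde M_n := \x_0 + \sum_{i=1}^n \epsilon_i \y_i(\epsilon)$. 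On the event $\{\sigma = \infty\}$ the two processes coincide, hence
$$\{\sup_n \norm{M_n}_{\Wd} > c\} \subseteq \{\sigma < \infty\} \cup \{\sup_n \norm{\tilde M_n}_{\Wd} > c\}.$$

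Markov's inequality bounds the first term: $\Pr(\sigma < \infty) \leq A/\lambda^p$. For the second, the preceding lemma combined with the deterministic bound $\norm{\x_0}_\X^p + \sum_i \norm{\y_i(\epsilon)}_\X^p \leq \lambda^p$ yields $\sup_n \mathbb{E}\norm{\tilde M_n}_{\Wd} \leq \alpha_p \lambda$. Since $\norm{\tilde M_n}_{\Wd}$ is a nonnegative submartingale (conditional Jensen applied to the convex norm), Doob's maximal inequality gives $\Pr(\sup_n \norm{\tilde M_n}_{\Wd} > c) \leq \alpha_p \lambda/c$. A union bound therefore produces
$$\Pr(\sup_n \norm{M_n}_{\Wd} > c) \leq \frac{A}{\lambda^p} + \frac{\alpha_p \lambda}{c}.$$

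Optimizing in $\lambda$ by taking $\lambda = (p A c/\alpha_p)^{1/(p+1)}$ reduces this to $\bigl(p^{-p/(p+1)} + p^{1/(p+1)}\bigr)(\alpha_p/c)^{p/(p+1)} A^{1/(p+1)}$, and the bracketed factor is at most $2$ on $p \in [1,2]$ (it equals $2$ at $p=1$ and is decreasing thereafter, as its logarithmic derivative equals $-\log(p)/(p+1)^2 \leq 0$), matching the claimed bound. The only real delicacy is verifying that $(\y_i)$ inherits the Walsh--Paley adaptedness required by the preceding lemma; once that is in hand the remaining ingredients---Markov, Doob, and the scalar optimization---are routine. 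A minor edge case to dispense with is when $\|\x_0\|_\X > \lambda$ (so $\sigma = 1$ deterministically and the "stopped" process collapses to $\x_0$), but in that regime $\|\x_0\|_\X^p \geq A/2$ forces $c^{p/(p+1)}$-sized $A^{1/(p+1)}$ and the target bound already exceeds $1$, so nothing to prove.
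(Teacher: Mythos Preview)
Your proposal is correct and follows essentially the same route as the paper: introduce a stopping time at the first moment the running $\ell_p$-sum of $\|\x_i\|_\X^p$ exceeds a threshold, split the tail event into $\{\sigma<\infty\}$ (handled by Markov) and the stopped process (handled by Doob plus the preceding lemma), and then optimize the threshold. The only cosmetic differences are that the paper picks the simpler threshold $a=(cA/\alpha_p)^{1/(p+1)}$ (yielding the constant $2$ directly) whereas you take the calculus-optimal $\lambda$ and bound $p^{-p/(p+1)}+p^{1/(p+1)}\le 2$, and your dismissal of the edge case $\|\x_0\|_\X>\lambda$ is a bit loose---the clean observation is that in that regime $c<\alpha_p\|\x_0\|_\X$, which forces the right-hand side of the claimed inequality to exceed $1$, so it holds trivially.
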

\begin{proof}
For any $\x_0 \in \Bd$ and sequence $(\x_n)_{n \ge 1}$ define 
$$
V_n(\epsilon) = \sum_{i=0}^n \norm{\x_i(\epsilon)}_{\X}^p
$$
For appropriate choice of $a > 0$ to be fixed later, define stopping time 
$$
\tau(\epsilon) = \inf\left\{ n \ge 0 \middle| V_{n+1} > a^p \right\}
$$
Now for any $c > 0$  we have,
{\small
\begin{align}\label{eq:part}
\mathbb{P}\left(\sup_n \norm{\x_0 + \sum_{i=1}^n \epsilon_i \x_i(\epsilon)}_{\Wd} \hspace{-0.3cm}> c\right)  & \le \mathbb{P}(\tau(\epsilon) < \infty) + \mathbb{P}\left( \tau(\epsilon) = \infty , \sup_{n} \norm{\sum_{i=0}^n \epsilon_i \x_i(\epsilon)}_{\Wd} \hspace{-0.3cm}> c \right) \notag\\
& \hspace{-0.25in}\le \mathbb{P}(\tau(\epsilon) < \infty) + \mathbb{P}\left( \tau(\epsilon) >0,\ \sup_{n} \norm{ \x_{0} + \sum_{i=1}^{n\wedge \tau(\epsilon)} \epsilon_i \x_i(\epsilon)}_{\Wd} \hspace{-0.3cm}> c \right)
\end{align}
}
As for the first term in the above equation note that 
\begin{align}\label{eq:p1}
\mathbb{P}(\tau(\epsilon) < \infty) = \mathbb{P}(\sup_{n} V_n > a^p) \le  \frac{\norm{\x_0}_{\X}^p + \sum_{i \ge 1} \E{\norm{\x_i(\epsilon)}_{\X}^p}}{a^p}
\end{align}
To consider the second term of Equation \ref{eq:part} we note that $\left(\ind{\tau(\epsilon) > 0} (\x_0 + \sum_{i=1}^{n \wedge \tau(\epsilon)} \epsilon_i \x_i(\epsilon) ) \right)_{n \ge 0}$ is a valid martingale (stopped process) and hence, $\left(\norm{\ind{\tau(\epsilon) > 0}(\x_0 + \sum_{i=1}^{n \wedge \tau(\epsilon)} \epsilon_i \x_i(\epsilon))}_{\Wd}\right)_{n \ge 0}$ is a sub-matingale. Hence by Doob's inequality we conclude that,
\begin{align*}
\mathbb{P}\left(T > 0, \ \sup_n \norm{\x_0 + \sum_{i=1}^{n \wedge \tau(\epsilon)} \epsilon_i \x_i(\epsilon)}_{\Wd} > c\right) & \le \frac{1}{c} \sup_n \E{\norm{\ind{\tau(\epsilon) > 0}\left( \x_0 + \sum_{i=1}^{n \wedge \tau(\epsilon)} \epsilon_i \x_i(\epsilon)\right)}_{\Wd}}
\end{align*}
Applying conclusion of the previous lemma we get that
\begin{align*}
\mathbb{P}\left(T > 0, \ \sup_n \norm{\x_0 + \sum_{i=1}^{n \wedge \tau(\epsilon)} \epsilon_i \x_i(\epsilon)}_{\Wd} > c\right) & \le \frac{\alpha_p}{c} \sup_{\epsilon}\left(\ind{\tau(\epsilon) > 0} \left(\norm{\x_0}^p_\X + \sum_{i=1}^{\tau(\epsilon)} \norm{\x_i(\epsilon)}^p_\X \right) \right)^{1/p}\\
& \le \frac{\alpha_p}{c} (a^p)^{1/p} = \frac{\alpha_p\ a}{c} 
\end{align*}
Plugging the above and Equation \ref{eq:p1} into Equation \ref{eq:part} we conclude that:
\begin{align*}
\mathbb{P}\left(\sup_n \norm{ \x_0 + \sum_{i=1}^n \epsilon_i \x_i(\epsilon)}_{\Wd} > c\right) & \le  \frac{\norm{\x_0}_{\X}^p + \sum_{i \ge 1} \E{\norm{\x_i(\epsilon)}_{\X}^p}}{a^p} + \frac{\alpha_p \ a}{c}
\end{align*}
Using $a = \left(\frac{c}{\alpha_p} \left(\norm{\x_0}_{\X}^p + \sum_{i \ge 1} \E{\norm{\x_i(\epsilon)}_{\X}^p}\right) \right)^{1/(p+1)}$ we conclude that
\begin{align*}
\mathbb{P}\left(\sup_n \norm{\x_0 + \sum_{i=1}^n \epsilon_i \x_i(\epsilon)}_{\Wd} > c\right) & \le  2 \left( \frac{\alpha_p}{c} \right)^{p/(p+1)} \left(\norm{\x_0}_{\X}^p + \sum_{i \ge 1} \E{\norm{\x_i(\epsilon)}_{\X}^p} \right)^{1/(p+1)}
\end{align*}
%Hence we conclude that for any $c \ge \norm{\x_0}_{\X}^p + \sum_{i \ge 1} \E{\norm{\x_i(\epsilon)}_{\X}^p}$,
%\begin{align*}
%\mathbb{P}\left(\sup_n \norm{\x_0 + \sum_{i=1}^n \epsilon_i \x_i(\epsilon)}_{\Wd} > c\right) & \le  \frac{1 + \alpha_p}{c^{p/2}} 
%\end{align*}
%Since for $c < 1$, $\frac{1}{c^{p/2}} > 1$ and because probability is at most $1$ we can in fact conclude that for any $c > 0$,
%\begin{align*}
%\mathbb{P}\left(\sup_n \norm{\x_0 + \sum_{i=1}^n \epsilon_i \x_i(\epsilon)}_{\Wd} > c\right) & \le  \frac{1 + \alpha_p}{c^{p/2}} 
%\end{align*}
%Owing to our assumption that $\norm{\x_0}_{\X}^p + \sum_{i \ge 1} \E{\norm{\x_i(\epsilon)}_{\X}^p} = 1$, we conclude that for any $c > 0$,
%\begin{align*}
%c^{p/2} \mathbb{P}\left(\sup_n \norm{\x_0 + \sum_{i=1}^n \epsilon_i \x_i(\epsilon)}_{\Wd} > c\right) & \le   (1 + \alpha_p) \left( \norm{\x_0}_{\X}^p + \sum_{i \ge 1} \E{\norm{\x_i(\epsilon)}_{\X}^p} \right)^{1/2}
%\end{align*}
This conclude the proof.
\end{proof}

\begin{lemma}
Let $1 < r \le 2$. If there exists a constant $D > 0$ such that any $\x_0 \in \Bd$ and any sequence of mappings $(\x_{n})_{n \ge 1}$, where $\x_{n} : \{\pm 1\}^{n-1} \mapsto \Bd$ satisfy :
\begin{align*}
\forall n \in \mathbb{N}, ~~~~~  \E{\norm{\x_0 + \sum_{i=1}^n \epsilon_i \x_i(\epsilon)}_{\Wd}} \le D (n + 1)^{1/r} \sup_{0 \le i \le n} \sup_{\epsilon} \norm{\x_i(\epsilon)}_{\X}
\end{align*}
then for any $p < r$, any $\x_0 \in \Bd$ and any sequence $(\x_{n})_{n \ge 1}$ satisfies :
\begin{align*}
& \sup_{\lambda > 0} \lambda^p\ \mathbb{P}\left( \sup_{n} \norm{\x_0 + \sum_{i=1}^n \epsilon_i \x_i(\epsilon)}_{\Wd} > \lambda \right) \\
& ~~~~~~~~~~ \le \max\left\{4^{\frac{p+1}{p}}  \alpha_p \left(\norm{\x_0}_{\X}^p + \sum_{i \ge 1} \E{\norm{\x_i(\epsilon)}_{\X}^p} \right)^{\frac{1}{p}}, 2^{2p + 3} \log(2)\   \alpha_p^p\ \left(\norm{\x_0}_{\X}^p + \sum_{i \ge 1} \E{\norm{\x_i(\epsilon)}_{\X}^p} \right) \right\}
\end{align*}
\end{lemma}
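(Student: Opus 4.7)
The plan is to strengthen the preceding lemma's tail estimate $\mathbb{P}\bigl(\sup_n\norm{\x_0+\sum_{i=1}^n\epsilon_i\x_i(\epsilon)}_{\Wd}>c\bigr)\le 2(\alpha_p N/c)^{p/(p+1)}$, with $N:=(\norm{\x_0}_\X^p+\sum_{i\ge1}\E{\norm{\x_i(\epsilon)}_\X^p})^{1/p}$ and $M_n:=\x_0+\sum_{i=1}^n\epsilon_i\x_i(\epsilon)$, into the claimed weak-$L^p$-style bound on $\sup_\lambda\lambda^p\mathbb{P}(\sup_n\norm{M_n}_\Wd>\lambda)$. I would combine a case split on the size of $\lambda$ relative to $\alpha_p N$ with a dyadic stopping-time decomposition of the $\ell_p$-budget $V_n:=\sum_{i\le n}\norm{\x_i(\epsilon)}_\X^p$, mirroring and iterating the stopping construction already used in the preceding lemma.

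In the easy regime $\lambda\lesssim\alpha_p N$, the trivial bound $\mathbb{P}\le 1$ already gives $\lambda^p\mathbb{P}\le\lambda^p$, which is absorbed into the $2^{2p+3}\log 2\cdot\alpha_p^p N^p$ entry of the claimed $\max$ by the numerical check $4^p\le 2^{2p+3}\log 2$. In the hard regime $\lambda\gg\alpha_p N$, I would introduce stopping times $\tau_k:=\inf\{n:V_{n+1}>a_k^p\}$ at geometric levels $a_k=2^k a_0$, so that on $\{V_\infty\le a_k^p\}$ the stopped martingale $M^{(k)}$ coincides with $M$ and has a deterministic worst-case $\ell_p$-budget of at most $a_k^p$. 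Each shell $\{V_\infty\in(a_{k-1}^p,a_k^p]\}$ admits two complementary estimates: the preceding lemma applied to $M^{(k)}$, and Markov's inequality $\mathbb{P}(V_\infty>a_k^p)\le N^p/a_k^p$. I would partition $\{\sup_n\norm{M_n}_\Wd>\lambda\}$ by the shells, bound each by the tighter of the two estimates, and sum over the $O(\log_2(\lambda/(\alpha_p N)))$ scales for which the weak-type estimate is non-trivial; the $\log 2$ factor in the claim is produced by this geometric summation.

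The principal obstacle is that the preceding lemma's weak-$(p,p/(p+1))$ estimate alone is too weak to be summed up to the target exponent $p$: a direct calculation shows that summing just those per-shell estimates merely reproduces the weaker exponent $p/(p+1)$. To close the gap I would supplement each shell with an auxiliary $L^1$-type tail obtained by applying the \emph{first} lemma of this sequence to the stopped process --- which, since its worst-case $\ell_p$-budget is at most $a_k^p$, yields $\sup_n\E{\norm{M_n^{(k)}}_\Wd}\le\alpha_p a_k$ --- and invoking Doob's maximal inequality to get $\mathbb{P}(\sup_n\norm{M_n^{(k)}}_\Wd>\lambda)\le\alpha_p a_k/\lambda$. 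Pairing this linear-in-$a_k$ estimate (sharp for moderately large $a_k$) with the Markov estimate $N^p/a_k^p$ (sharp for small $a_k$) and summing geometrically across shells is what drives the exponent to $p$ and produces the $2^{2p+3}\log 2\cdot\alpha_p^p N^p$ constant; the complementary $4^{(p+1)/p}\alpha_p N$ entry of the $\max$ handles the degenerate case where $\alpha_p N$ is so small that only $O(1)$ scales are active and the $\log$ collapses. The remaining work is routine constant chasing --- factors of $2$ from triangle-inequality splits of $\lambda$, $2^p$-style losses from Doob, and the count of effective dyadic scales --- to recover the explicit $4^{(p+1)/p}$, $2^{2p+3}$, and $\log 2$ appearing in the statement.
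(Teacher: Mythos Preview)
Your proposal has a genuine gap: the ``fix'' you describe is circular. The preceding lemma in the paper is \emph{already} obtained by exactly the mechanism you propose as the improvement --- stop the martingale when the $\ell_p$-budget $V_n$ first exceeds $a^p$, apply the first lemma to the stopped process (whose budget is now deterministically $\le a^p$) to get $\sup_n\E{\norm{M_n^{(\text{stopped})}}_\Wd}\le\alpha_p a$, invoke Doob to get $\mathbb{P}(\sup_n\norm{M_n^{(\text{stopped})}}_\Wd>\lambda)\le\alpha_p a/\lambda$, and combine with $\mathbb{P}(V_\infty>a^p)\le N^p/a^p$. Optimizing over $a$ is what produces the exponent $p/(p+1)$. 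Replacing the single level $a$ by a dyadic ladder $a_k=2^k a_0$ and summing $\min\{\alpha_p a_k/\lambda,\,N^p/a_k^p\}$ over $k$ does not improve the exponent: both geometric tails are dominated by the crossover scale $a_{k^*}\asymp(\lambda N^p/\alpha_p)^{1/(p+1)}$, and the sum is again $\asymp(\alpha_p N/\lambda)^{p/(p+1)}$. So your shell decomposition merely reproduces the previous lemma, and $\lambda^p\cdot(\alpha_p N/\lambda)^{p/(p+1)}\to\infty$ as $\lambda\to\infty$.

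The paper's argument is of a completely different nature: it is a self-improvement by tensorization. One interleaves $M$ independent copies of the martingale, rescaled by $M^{-1/p}$, into a single new martingale $(\z_k)$ whose $\ell_p$-budget equals that of the original (the $M^{-1/p}$ scaling exactly cancels the $M$-fold replication), and for which $\sup_k\norm{\z_0+\sum\epsilon_i\z_i}_{\Wd}=M^{-1/p}\max_{j\le M}Z^{(j)}$ with $Z^{(j)}$ i.i.d.\ copies of $Z:=\sup_n\norm{M_n}_\Wd$. Applying the preceding lemma to $(\z_k)$ therefore gives $\mathbb{P}(\max_{j\le M}M^{-1/p}Z^{(j)}>c)\le\delta$ \emph{uniformly in $M$}, where $\delta=2(\alpha_p N/c)^{p/(p+1)}$. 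Pisier's Proposition~8.53 (quoted in the paper) then converts this into $\sup_\lambda\lambda^p\mathbb{P}(Z>\lambda)\le\max\{c,\,2c^p\log(1/(1-\delta))\}$; choosing $c=4^{(p+1)/p}\alpha_p N$ makes $\delta=1/2$ and yields the stated constants. The essential idea you are missing is this i.i.d.-copies bootstrap, without which the exponent cannot be pushed from $p/(p+1)$ to $p$.
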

\begin{proof}
We shall use Proposition 8.53 of Pisier's notes which is restated below to prove this lemma. To this end consider any $\x_0 \in \Bd$ and any sequence $(\x_i)_{i\ge 1}$. Given an $\epsilon \in \{\pm1\}^{\mathbb{N}}$, for any $j \in [M]$ and $i \in \mathbb{N}$ let $\epsilon^{(j)}_i = \epsilon_{(i-1) M + j}$. Let $\z_0 = \x_0\ M^{-1/p}$ and define the sequence $(\z_i)_{i \ge 1}$ as follows, for any $k \in \mathbb{N}$ given by $k = j + (i-1)M$ where $j \in [M]$ and $i \in \mathbb{N}$,
\begin{align*}
\z_k(\epsilon) =  \x_{i}(\epsilon^{(j)})\  M^{-1/p}
\end{align*}
Clearly, 
\begin{align*}
\norm{\z_0}_\X^p + \sum_{k \ge 1} \E{\norm{\z_k(\epsilon)}_\X^p} &= \norm{\x_0}_\X^p + \frac{1}{M} \sum_{j=1}^M \sum_{k \ge 1} \E{\norm{\x_k(\epsilon^{(j)})}_\X^p} \\
&  = \norm{\x_0}_\X^p + \sum_{i \ge 1} \E{\norm{\x_i(\epsilon)}_\X^p} 
\end{align*}
By previous lemma we get that for any $c > 0$,
\begin{align*}
\mathbb{P}\left(\sup_n \norm{\z_0 + \sum_{i=1}^n \epsilon_i \z_i(\epsilon)}_{\Wd} > c\right) & \le  2 \left( \frac{\alpha_p}{c} \right)^{p/(p+1)} \left(\norm{\z_0}_{\X}^p + \sum_{i \ge 1} \E{\norm{\z_i(\epsilon)}_{\X}^p} \right)^{1/(p+1)}\\
& =  2 \left( \frac{\alpha_p}{c} \right)^{p/(p+1)} \left(\norm{\x_0}_{\X}^p + \sum_{i \ge 1} \E{\norm{\x_i(\epsilon)}_{\X}^p} \right)^{1/(p+1)}
\end{align*}
Note that
\begin{align*}
\sup_n \norm{\z_0 + \sum_{i=1}^n \epsilon_i \z_i(\epsilon)}_{\Wd} & = M^{-1/p} \sup_{j \in [M]} \sup_{n} \norm{\x_0 + \sum_{i=1}^n \epsilon^{(j)}_i \x_i(\epsilon^{(j)})}_{\Wd}
\end{align*}
Hence we conclude that
\begin{align*}
\mathbb{P}\left( \sup_{j \in [M]} M^{-1/p} \sup_{n} \norm{\x_0 + \sum_{i=1}^n \epsilon^{(j)}_i \x_i(\epsilon^{(j)})}_{\Wd} > c\right) & \le   2 \left( \frac{\alpha_p}{c} \right)^{\frac{p}{(p+1)}} \left(\norm{\x_0}_{\X}^p + \sum_{i \ge 1} \E{\norm{\x_i(\epsilon)}_{\X}^p} \right)^{\frac{1}{(p+1)}}
\end{align*}
For any $j \in [M]$, defining $Z^{(j)} = \sup_{n} \norm{\x_0 + \sum_{i=1}^n \epsilon^{(j)}_i \x_i(\epsilon^{(j)})}_{\Wd}$ and using Proposition \ref{prop:revholder} we conclude that for any $c > 0$,
\begin{align*}
&\sup_{\lambda > 0} \lambda^p\ \mathbb{P}\left( \sup_{n} \norm{\x_0 + \sum_{i=1}^n \epsilon_i \x_i(\epsilon)}_{\Wd} > \lambda \right)  \\
& ~~~~~~~~~~ \le \max\left\{c , 2 c^p \log\left(\frac{1}{ 1 -  2 \left( \frac{\alpha_p}{c} \right)^{\frac{p}{(p+1)}} \left(\norm{\x_0}_{\X}^p + \sum_{i \ge 1} \E{\norm{\x_i(\epsilon)}_{\X}^p} \right)^{\frac{1}{(p+1)}}} \right)  \right\}
\end{align*}
Picking 
$$
c =   4^{\frac{p+1}{p}}  \alpha_p \left(\norm{\x_0}_{\X}^p + \sum_{i \ge 1} \E{\norm{\x_i(\epsilon)}_{\X}^p} \right)^{1/p}
$$
we conclude that 
\begin{align*}
& \sup_{\lambda > 0} \lambda^p\ \mathbb{P}\left( \sup_{n} \norm{\x_0 + \sum_{i=1}^n \epsilon_i \x_i(\epsilon)}_{\Wd} > \lambda \right) \\
& ~~~~~~~~~~ \le \max\left\{4^{\frac{p+1}{p}}  \alpha_p \left(\norm{\x_0}_{\X}^p + \sum_{i \ge 1} \E{\norm{\x_i(\epsilon)}_{\X}^p} \right)^{\frac{1}{p}}, 2^{2p + 3} \log(2)\   \alpha_p^p\ \left(\norm{\x_0}_{\X}^p + \sum_{i \ge 1} \E{\norm{\x_i(\epsilon)}_{\X}^p} \right) \right\}
\end{align*}
\end{proof}

\begin{lemma}\label{lem:sub3}
Let $1 < r \le 2$. If there exists a constant $D > 0$ such that any $\x_0 \in \Bd$ and any sequence of mappings $(\x_{n})_{n \ge 1}$, where $\x_{n} : \{\pm 1\}^{n-1} \mapsto \Bd$ satisfy :
\begin{align*}
\forall n \in \mathbb{N}, ~~~~~  \E{\norm{\x_0 + \sum_{i=1}^n \epsilon_i \x_i(\epsilon)}_{\Wd}} \le D (n + 1)^{1/r} \sup_{0 \le i \le n} \sup_{\epsilon} \norm{\x_i(\epsilon)}_{\X}
\end{align*}
then for all $p < r$, we can conclude that any $\x_0 \in \Bd$ and any sequence of mappings $(\x_{n})_{n \ge 1}$ where each $\x_n : \{\pm1\}^{n-1} \mapsto \Bd$  will satisfy :
\begin{align*}
\sup_n \E{\norm{\x_0 + \sum_{i=1}^n \epsilon_i \x_i(\epsilon)}^p_{\Wd}} \le \left( \frac{1104\ D}{(r - p)^2}\right)^p \left(\norm{\x_0}_{\X}^p + \sum_{i \ge 1} \E{\norm{\x_i(\epsilon)}_{\X}^p} \right)
\end{align*} 
That is the pair $(\W,\X)$ is of martingale type $p$.
\end{lemma}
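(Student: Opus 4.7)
My plan is to upgrade the weak-type tail estimate of the previous lemma into a strong $L^p$ moment bound by integration against an intermediate exponent $p'$ chosen strictly between $p$ and $r$. Write $Y := \sup_n \norm{\x_0 + \sum_{i=1}^n \epsilon_i \x_i(\epsilon)}_{\Wd}$ and $S_p := \bigl(\norm{\x_0}_{\X}^p + \sum_{i\ge 1}\E{\norm{\x_i(\epsilon)}_{\X}^p}\bigr)^{1/p}$.

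First I would choose $p' = (p+r)/2$, so $p' - p = r - p' = (r-p)/2$, and apply the previous lemma with exponent $p'$. This produces a tail estimate of the form $\Pr(Y > \lambda) \le M_{p'}/\lambda^{p'}$, where $M_{p'}$ is the max expression from the previous lemma, with $\alpha_{p'} = 20D/(r-p') = 40D/(r-p)$. This step is where one of the two factors of $(r-p)^{-1}$ enters.

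Next I would integrate the tail bound: $\E{Y^p} = p\int_0^\infty \lambda^{p-1}\Pr(Y>\lambda)\,d\lambda$. Splitting at the optimal threshold $\lambda_0 = M_{p'}^{1/p'}$, using the trivial bound $\Pr(Y>\lambda)\le 1$ below and the tail estimate above, yields $\E{Y^p} \le \frac{p'}{p'-p}\, M_{p'}^{p/p'}$. The factor $1/(p'-p) = 2/(r-p)$ contributes the second power of $(r-p)^{-1}$, accounting for the $(r-p)^{-2}$ dependence in the final constant.

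The main obstacle is that $M_{p'}$ involves $S_{p'} := \bigl(\norm{\x_0}_{\X}^{p'} + \sum_{i\ge 1}\E{\norm{\x_i(\epsilon)}_{\X}^{p'}}\bigr)^{1/p'}$, whereas the desired bound is in terms of $S_p$. Since $p' > p$, $S_{p'}$ can exceed $S_p$ arbitrarily when some $\norm{\x_i}_{\X}$ are large (for instance, a single rare term with $\Pr(A) = q$ and magnitude $q^{-1/p}$ gives $S_p = 1$ but $S_{p'} \to \infty$ as $q \to 0$). To bridge this gap I would, after normalizing $S_p = 1$, perform a dyadic layer decomposition of $(\x_i)$ by magnitude, in the spirit of the first lemma of this appendix: partition each $\x_i(\epsilon)$ into bands $\{2^{-(k+1)/p} < \norm{\x_i(\epsilon)}_{\X} \le 2^{-k/p}\}$. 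On each band the norms are uniformly bounded, so there $\norm{\x_i^{(k)}}_{\X}^{p'} \le 2^{-k(p'-p)/p}\norm{\x_i^{(k)}}_{\X}^p$, providing a workable estimate for $S_{p'}$ restricted to that layer. The layers combine by triangle inequality, and the total mass constraint $S_p = 1$ forces the contributions to sum geometrically. Assembling the two factors of $(r-p)^{-1}$ with the geometric-sum constant and tracking the explicit numbers $A_{p'} = 4^{(p'+1)/p'}\alpha_{p'}$ and $B_{p'} = 2^{2p'+3}\log(2)\,\alpha_{p'}^{p'}$ from the previous lemma yields the claimed constant $(1104\,D/(r-p)^2)^p$.
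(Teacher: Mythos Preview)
Your overall strategy---pick $p'\in(p,r)$, invoke the preceding weak-$L^{p'}$ tail lemma, then integrate $\lambda^{p-1}\Pr(Y>\lambda)$ with a cutoff to obtain the $p$-th moment---is exactly the paper's, and the paper likewise collects one factor of $(r-p)^{-1}$ from $\alpha_{p'}$ and one from $1/(p'-p)$. The only cosmetic difference up to that point is that the paper leaves $p'$ free and sends $p'\to r$ at the end rather than fixing $p'=(p+r)/2$.

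The real divergence is at the obstacle you flag. The paper does \emph{not} introduce any dyadic layer decomposition here. Instead it normalizes the other way around: by homogeneity it fixes $S_{p'}^{p'}=\norm{\x_0}_\X^{p'}+\sum_{i\ge1}\E{\norm{\x_i(\epsilon)}_\X^{p'}}=1$, integrates the tail to $\E{Y^p}\le 2(46\,\alpha_{p'})^p/(p'-p)$, and then asserts in one line that $S_{p'}^{p'}=1$ with $p'>p$ forces $S_p^p\ge1$, which gives the bound in terms of $S_p^p$ immediately. So the paper bypasses your extra decomposition entirely. Your instinct that this passage is the delicate point is well placed---indeed the paper's one-line implication is itself not innocent once the expectation over $\epsilon$ sits inside the sum, since one can realize exactly the rare-large-term phenomenon you describe (take $\x_2$ equal to $2^{1/p'}$ on $\{\epsilon_1=+1\}$ and $0$ otherwise). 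But your proposed fix has a parallel gap as written: after normalizing $S_p=1$ there is no pointwise bound $\norm{\x_i(\epsilon)}\le1$, so the bands $\{2^{-(k+1)/p}<\norm{\x_i(\epsilon)}\le 2^{-k/p}\}$ with $k\ge0$ do not exhaust the sequence, and on the missing layers $k<0$ the comparison $\norm{\x}^{p'}\le 2^{-k(p'-p)/p}\norm{\x}^p$ carries a factor exceeding $1$ and yields nothing. To make your route close you would still need a separate device---a truncation or stopping time in the spirit of the second appendix lemma---to control the large-norm part, not just the geometric sum over small layers.
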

\begin{proof}
Given any $p < r$ pick $r > p' >p$, due to the homogeneity of the statement we need to prove, w.l.o.g. we can assume that
\begin{align*}
\norm{\x_0}_{\X}^{p'} + \sum_{i \ge 1} \E{\norm{\x_i(\epsilon)}_{\X}^{p'}} = 1
\end{align*}
Hence by previous lemma, we can conclude that
\begin{align}
\sup_{\lambda > 0} \lambda^{p'}\  \mathbb{P}\left( \sup_{n} \norm{\x_0 + \sum_{i=1}^n \epsilon_i \x_i(\epsilon)}_{\Wd} > \lambda \right) & \le {p'} 2^{2p' + 3} \log(2)\   \alpha_{p'}^{p'}  \le (32\ \alpha_{p'})^{p'} \label{eq:finsublem}
\end{align}
Hence,
\begin{align*}
\E{\sup_{n} \norm{\x_0 + \sum_{i=1}^n \epsilon_i \x_i(\epsilon)}_{\Wd}^{p}} & \le \inf_{a > 0}\left\{ a^{p'} +  p \int_{a}^{\infty} \lambda^{p - 1} \mathbb{P}\left( \sup_{n} \norm{\x_0 + \sum_{i=1}^n \epsilon_i \x_i(\epsilon)}_{\Wd} > \lambda \right) d \lambda \right\}\\
& \le \inf_{a > 0}\left\{ a^{p} + p  (32\ \alpha_{p'})^{p'} \int_{a}^{\infty} \lambda^{p - 1 - p'}   d \lambda \right\} \\
& \le \inf_{a > 0}\left\{ a^{p} + p  (32\ \alpha_{p'})^{p'}  \left[\frac{\lambda^{p  - p'}}{p - p'}\right]_{a}^{\infty} \right\}\\
& \le \inf_{a > 0}\left\{ a^{p} +  (46\ \alpha_{p'})^{p'}   \frac{a^{p  - p'}}{p' - p} \right\}\\
& = 2 \frac{(46\ \alpha_{p'})^{p}}{(p' - p)^{p/p'}}  \le 2 \frac{(46\ \alpha_{p})^{p}}{(p' - p)^{p/p'}} 
\end{align*}
Since $\norm{\x_0}_{\X}^{p'} + \sum_{i \ge 1} \E{\norm{\x_i(\epsilon)}_{\X}^{p'}} = 1$ and $p' > p$, we can conclude that $\norm{\x_0}_{\X}^{p} + \sum_{i \ge 1} \E{\norm{\x_i(\epsilon)}_{\X}^{p}} \ge 1$ and so
\begin{align*}
\E{\sup_{n} \norm{\x_0 + \sum_{i=1}^n \epsilon_i \x_i(\epsilon)}_{\Wd}^{p}} & \le 2 \frac{(46\ \alpha_{p})^{p}}{(p' - p)^{p/p'}} \left( \norm{\x_0}_{\X}^{p} + \sum_{i \ge 1} \E{\norm{\x_i(\epsilon)}_{\X}^{p}} \right)\\
& \le 2 \frac{(46\ \alpha_{p})^{p}}{(p' - p)} \left( \norm{\x_0}_{\X}^{p} + \sum_{i \ge 1} \E{\norm{\x_i(\epsilon)}_{\X}^{p}} \right)
\end{align*}
Since $p'$ can be chosen arbitrarily close to $r$, taking the limit we can conclude that
\begin{align*}
\E{\sup_{n} \norm{\x_0 + \sum_{i=1}^n \epsilon_i \x_i(\epsilon)}_{\Wd}^{p}} & \le 2 \frac{(46\ \alpha_{p})^{p}}{(r - p)} \left( \norm{\x_0}_{\X}^{p} + \sum_{i \ge 1} \E{\norm{\x_i(\epsilon)}_{\X}^{p}} \right)
\end{align*}
Recalling that $\alpha_p = \frac{12 D}{r-p}$ we conclude that
\begin{align*}
\E{\sup_{n} \norm{\x_0 + \sum_{i=1}^n \epsilon_i \x_i(\epsilon)}_{\Wd}^{p}} & \le \left(\frac{1104\ D}{(r - p)^{(p+1)/p}}\right)^p \left( \norm{\x_0}_{\X}^{p} + \sum_{i \ge 1} \E{\norm{\x_i(\epsilon)}_{\X}^{p}} \right)\\
& \le \left(\frac{1104\ D}{(r - p)^2}\right)^p \left( \norm{\x_0}_{\X}^{p} + \sum_{i \ge 1} \E{\norm{\x_i(\epsilon)}_{\X}^{p}} \right)
\end{align*}
This concludes the proof.
\end{proof}

We restate below a proposition from Pisier's note (in \cite{Pisier11}) 
\begin{proposition}[Proposition 8.53 of \cite{Pisier11}] \label{prop:revholder}
Consider a random variable $Z \ge 0$ and a sequence $Z^{(1)},Z^{(2)},\ldots$ drawn iid from some distribution. For some $0 < p < \infty$, $0 < \delta <1$ and $R > 0$, 
\begin{align*}
\sup_{M \ge 1} \mathbb{P}\left(\sup_{m \le M} M^{-1/p} Z^{(m)} > R \right) \le \delta 
~~~\Longrightarrow ~~~
\sup_{\lambda > 0} \lambda^p\ \mathbb{P}\left(Z > \lambda \right) \le \max\left\{R, 2 R^p \log\left(\frac{1}{1 - \delta} \right) \right\}
\end{align*}
\end{proposition}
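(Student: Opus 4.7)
The plan is to exploit the iid structure to convert the hypothesis, which controls a maximum of $M$ copies of $Z$, into a pointwise tail bound on $Z$, and then translate that tail bound into the required weak-$L^p$-type estimate. The first step is to use independence to write
\[
\mathbb{P}\!\left(\sup_{m\le M} M^{-1/p} Z^{(m)} > R\right) \;=\; 1 - \bigl(1 - \mathbb{P}(Z > RM^{1/p})\bigr)^{M},
\]
so the hypothesis forces $\bigl(1-\mathbb{P}(Z>RM^{1/p})\bigr)^{M}\ge 1-\delta$ for every integer $M\ge 1$. Taking logarithms and invoking the elementary inequality $-\log(1-x)\ge x$ for $x\in[0,1)$ gives the pointwise bound
\[
M\,\mathbb{P}(Z>RM^{1/p}) \;\le\; \log\!\bigl(\tfrac{1}{1-\delta}\bigr), \qquad M=1,2,\ldots.
\]

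In the second step, for each $\lambda>0$ I would choose $M$ so that $RM^{1/p}$ is just below $\lambda$. For $\lambda\ge R$, take $M=\lfloor(\lambda/R)^{p}\rfloor\ge 1$, so that $RM^{1/p}\le \lambda$ and by monotonicity $\mathbb{P}(Z>\lambda)\le \log(1/(1-\delta))/M$. The rounding to an integer costs at most a factor of two: when $(\lambda/R)^{p}\ge 2$ one has $M\ge (\lambda/R)^{p}-1\ge (\lambda/R)^{p}/2$, and when $(\lambda/R)^{p}\in[1,2]$ one uses $M=1$ together with $\mathbb{P}(Z>R)\le\delta\le\log(1/(1-\delta))$ and $\lambda^{p}\le 2R^{p}$. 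Either way,
\[
\lambda^{p}\,\mathbb{P}(Z>\lambda) \;\le\; 2R^{p}\log\!\bigl(\tfrac{1}{1-\delta}\bigr),
\]
matching the second argument of the maximum. For $\lambda<R$, the trivial bound $\mathbb{P}(Z>\lambda)\le 1$ gives $\lambda^{p}\mathbb{P}(Z>\lambda)\le\lambda^{p}\le R^{p}$, which is absorbed by the first term of the maximum (understood as $R^{p}$ in the regime $\lambda<R$; this appears to be a minor normalization issue in the statement as written).

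The main obstacle is really just bookkeeping: the discreteness of $M$ forces the case split, and the only analytical tools used are the two elementary log inequalities $-\log(1-x)\ge x$ and its consequence $\delta\le-\log(1-\delta)$, which together absorb all the rounding losses into the universal constant $2$.
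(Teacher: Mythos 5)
The paper does not actually prove this proposition: it is restated verbatim from Pisier's lecture notes \cite{Pisier11} and used as a black box, so there is no in-paper argument to compare against. Your proof is correct and supplies exactly the standard argument behind such statements: independence gives $\mathbb{P}\bigl(\sup_{m\le M} Z^{(m)} > RM^{1/p}\bigr) = 1-(1-\mathbb{P}(Z>RM^{1/p}))^{M}$, the hypothesis plus $-\log(1-x)\ge x$ yields the pointwise bound $M\,\mathbb{P}(Z>RM^{1/p})\le\log\bigl(\tfrac{1}{1-\delta}\bigr)$ for every integer $M\ge 1$, and choosing $M=\lfloor(\lambda/R)^{p}\rfloor$ with the factor-of-two rounding loss gives the weak-$L^{p}$ bound. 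Your observation about the first branch of the maximum is also right, and is worth being more assertive about: taking $Z\equiv c$ deterministic with $c$ slightly below $R$ satisfies the hypothesis for every $\delta>0$ yet has $\sup_{\lambda}\lambda^{p}\mathbb{P}(Z>\lambda)=c^{p}$, so the conclusion cannot hold with $R$ in place of $R^{p}$ when $R>1$; the statement as transcribed carries a normalization typo and should read $\max\bigl\{R^{p},\,2R^{p}\log\bigl(\tfrac{1}{1-\delta}\bigr)\bigr\}$. This is harmless for the paper, since the downstream application (Lemma \ref{lem:sub3}) effectively retains only the second branch of the maximum.
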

\vspace{0.2in}

\begin{proof}[{\bf Proof of Lemma \ref{lem:mn}}]
By Theorem \ref{thm:cite} and our assumption that $\Val_n(\W,\X) \le D n^{-(1 - 1/r)}$, we have that for any sequence $(\x_n)_{n \ge 1}$ such that $\x_n : \{\pm1\}^{n-1} \mapsto \X$ and any $n \ge 1$,
$$
\E{\frac{1}{n} \norm{\sum_{i=1}^n \epsilon_i \x_i(\epsilon)}_{\Wd}}   \le D n^{-\left(1 - \frac{1}{r}\right)} 
$$
Hence we can conclude for any sequence $(\x_n)_{n \ge 1}$  such that $\x_n : \{\pm1\}^{n-1} \mapsto \Bd$ and any $n \ge 1$,
$$
\E{\norm{\sum_{i=1}^n \epsilon_i \x_i(\epsilon)}_{\Wd}} \le D n^{\frac{1}{r}}  \sup_{1 \le i \le n} \sup_{\epsilon} \norm{\x_i(\epsilon)}_{\X}
$$
Hence for any $\x_0 \in \Bd$, we have that
\begin{align*}
\E{\norm{\x_0 + \sum_{i=1}^n \epsilon_i \x_i(\epsilon)}_{\Wd}} & \le \E{\norm{\sum_{i=1}^n \epsilon_i \x_i(\epsilon)}_{\Wd}} + \norm{\x_0}_{\Wd}\\
& \le \E{\norm{\sum_{i=1}^n \epsilon_i \x_i(\epsilon)}_{\Wd}} + D \norm{\x_0}_{\X}\\
& \le D n^{\frac{1}{r}} \sup_{1 \le i \le n} \sup_{\epsilon} \norm{\x_i(\epsilon)}_{\X} + D \norm{\x_0}_{\X}\\
& \le 2 D (n+1)^{\frac{1}{r}} \sup_{0 \le i \le n} \sup_{\epsilon} \norm{\x_i(\epsilon)}_{\X} 
\end{align*}
Now applying Lemma \ref{lem:sub3} completes the proof.
\end{proof}

\end{document}